\newtheorem{lemma}{Lemma}
\newtheorem{theorem}{Theorem}
\newtheorem{corollary}{Corollary}
\newtheorem{assumption}{Assumption}
\newtheorem{remark}{Remark}
\newcommand{\cN}{\mathcal{N}}
\newcommand{\cO}{\mathcal{O}}
 \newcommand{\cL}{\mathcal{L}}
\newcommand{\EE}[1]{\mathbb{E}\left[#1\right]}
\newcommand{\E}{\mathbb{E}}
\DeclareMathOperator*{\arginf}{arg\,inf}
\DeclareMathOperator{\polylog}{polylog}
\DeclareMathOperator{\logloglog}{logloglog}
\DeclareMathOperator{\polyloglog}{polyloglog}
\newcommand*{\MyDef}{\mathrm{def}}
\newcommand*{\eqdefU}{\ensuremath{\mathop{\overset{\MyDef}{=}}}}
\newcommand*{\eqdef}{\mathop{\overset{\MyDef}{\resizebox{\widthof{\eqdefU}}{
\heightof{=}}{=}}}}
\definecolor{babyblue}{rgb}{0.54, 0.81, 0.94}
\definecolor{citrine}{rgb}{0.89, 0.82, 0.04}
\icmltitlerunning{Simple regret for infinitely many armed bandits}
\begin{document} 

\twocolumn[
\icmltitle{Simple regret for infinitely many armed bandits}

% It is OKAY to include author information, even for blind
% submissions: the style file will automatically remove it for you
% unless you've provided the [accepted] option to the icml2013
% package.
\vspace*{-0.5em}
\icmlauthor{Alexandra Carpentier}{a.carpentier@statslab.cam.ac.uk}
%\vspace*{-0.1em}
\icmladdress{Statistical Laboratory, CMS, Wilberforce Road, CB3 0WB, University of Cambridge, United Kingdom}
\icmlauthor{Michal Valko}{michal.valko@inria.fr%\hspace*{-0.3em}
}
%\vspace*{-0.1em}
 \icmladdress{INRIA Lille - Nord Europe, SequeL team, 40 avenue Halley 59650, Villeneuve d'Ascq, France}
\vspace*{-0.2em}

% You may provide any keywords that you 
% find helpful for describing your paper; these are used to populate 
% the "keywords" metadata in the PDF but will not be shown in the document
\icmlkeywords{Bandit Theory, Online Learning}

\vskip 0.3in
%\vskip -0.05in
]

\begin{abstract}
We consider a stochastic bandit problem with infinitely many arms. In this setting, the learner has no chance of trying all the arms even once and has to dedicate its limited number of samples only to a certain number of arms. All previous algorithms for this setting were designed for minimizing the cumulative regret of the learner. In this paper, we propose an algorithm aiming at minimizing the simple regret. As in the cumulative regret setting of infinitely many armed bandits, the rate of the simple regret will depend on a parameter $\beta$ characterizing the distribution of the near-optimal arms. We prove that depending on $\beta$, our algorithm is minimax optimal either up to a multiplicative constant or up to a $\log(n)$ factor. We also provide extensions to several important cases: when $\beta$ is unknown, in a natural setting where the near-optimal arms have a small variance, and in the case of unknown time horizon.
%we provide a matching lower bound which shows that our algorithm is minimax optimal.
\end{abstract}
%\vskip  -1em
\section{Introduction}
%\vspace{-0.1em}
%In the beginnings of sequential decision making, most of the research was concern with 
%how to optimally select from small number of actions. However, there are many situations
%where we are faced with a large number of possible actions. Whenever the actions 
%exhibit some useful structural property, one can attempt to use it to deal with a big action space.
%This has lead to linear, convex, combinatorial, submodular, contextual and other settings. 
%Typically the results scale with number of possible actions or some dimension of the space.

Sequential decision making has been recently fueled by several industrial applications, e.g.,~advertisement, and recommendation systems. In many of these situations, the learner is faced with a large number of possible actions, among which it has to make a decision. The setting we consider is a direct extension of a classical decision-making setting, in which we only receive feedback for the actions we choose, the \textit{bandit setting}. In this setting, at each time~$t$, the learner can choose among all the actions (called the \emph{arms}) and receives a sample (\emph{reward}) from the chosen action, which is typically a noisy characterization of the action. The learner performs $n$ such rounds and its performance is then evaluated with respect to some criterion, for instance the cumulative regret or the simple regret. 
% as e.g.~maximizing the sum of all rewards (cumulative regret criterion).

In the classical, multi-armed bandit setting, the number of actions is assumed to be finite and small when compared to the number of decisions. In this paper, we consider an extension of this setting to infinitely many actions, the \textit{infinitely many armed bandits}~\cite{berry1997bandits,wang2008algorithms,bonald2013two-target}. Inevitably, the sheer amount of possible actions makes it impossible to try each of them even once. Such a setting is practically relevant for cases where one faces a finite, but extremely large number of actions. This setting was first formalized by~\citet{berry1997bandits} as follows. At each time $t$, the learner can either sample an arm (a distribution) that has been already observed in the past, or sample a new arm, whose mean $\mu$ is sampled from the \textit{mean reservoir distribution} $\cL$.

The additional challenges of the infinitely many armed bandits with respect to the multi-armed bandits come from two sources. 
First, we need to find a good arm among the sampled ones. Second, we need to sample 
(at least once) enough arms in order to have (at least once) a reasonably good one. These two difficulties ask for a while which we call the \textit{arm selection tradeoff}. It is different from the known \textit{exploration/exploitation tradeoff} and more linked to model selection principles: On one hand, we want to sample only from a small subsample of arms so that we can decide, with enough accuracy, which one is the best one among them. On the other hand, we want to sample as many arms as possible in order to have a higher chance to sample a good arm at least once. This tradeoff makes the problem of infinitely many armed bandits significantly different from the classical bandit problem.

\citet{berry1997bandits} provide asymptotic, minimax-optimal (up to a $\log n$ factor) bounds for the \textit{average cumulative regret}, defined as the difference between  $n$ times the highest possible value $\bar \mu^*$ of the mean reservoir distribution and the mean of the sum of all samples that the learner collects. A follow-up on this result was the work of~\citet{wang2008algorithms}, providing algorithms with finite-time regret bounds and the work of~\citet{bonald2013two-target}, giving an algorithm that is optimal with exact constants in a strictly more specific setting. In all of this prior work, the authors show that it is the \textit{shape} of the arm reservoir distribution what characterizes the \textit{minimax-optimal rate} of the average cumulative regret. Specifically, \citet{berry1997bandits} and~\citet{wang2008algorithms} assume that the mean reservoir distribution is such that, for a small $\varepsilon>0$, locally around the best arm $\bar \mu^*$, we have that
\begin{align}\label{eq:refreg}
\mathbb P_{\mu \sim \cL}\left(\bar \mu^* - \mu \geq \varepsilon\right)\approx \varepsilon^\beta,
\end{align}
that is, they assume that the mean reservoir distribution is $\beta$-regularly varying in $\bar \mu^*$. 
When this assumption is satisfied with a known $\beta$, their algorithms achieve an expected cumulative regret of order
\begin{equation}\label{eq:cumreg}
\!\!\!\!\!\EE{R_n}\!=\!\cO\left(\max\left(n^{\frac{\beta}{\beta+1}} \polylog n, \sqrt{n} \polylog n\right) \right).
\end{equation}
The limiting factor in the general setting is a $1/\sqrt{n}$ rate for estimating the mean of any of the arms with $n$ samples. This gives the rate~\eqref{eq:cumreg} of~$\sqrt{n}$. It can be refined if the distributions of the arms, that are sampled from the mean reservoir distribution, 
are Bernoulli of mean~$\mu$ and $\bar \mu^* = 1$  or in the same spirit, if the distributions of the arms are defined on $[0,1]$ and $\bar \mu^* = 1$~as
\begin{equation}\label{eq:cumreg2}
\EE{R_n}  = \cO\left(n^{\frac{\beta}{\beta+1}} \polylog n\right).
\end{equation}
\citet{bonald2013two-target} refine the result~\eqref{eq:cumreg2} even more  by removing the $\polylog n$ factor and proving upper and lower bounds that \textit{exactly match}, even in terms of constants, for a specific sub-case of a uniform mean reservoir distribution. Notice that the rate~\eqref{eq:cumreg2} is faster than the more general rate~\eqref{eq:cumreg}. This comes from the fact that they  assume that the variances of the arms decay with their quality, making finding a good arm easier. For both rates (\ref{eq:cumreg} and~\ref{eq:cumreg2}), $\beta$~is the \textit{key parameter} for solving the arm selection tradeoff: with smaller $\beta$ it is more likely that the mean reservoir distribution outputs a high value, and therefore, we need fewer arms for the optimal arm selection tradeoff.

Previous algorithms for this setting were designed for minimizing the cumulative regret of the learner which optimizes the cumulative sum of the rewards. In this paper, we consider the problem of minimizing the \textit{simple regret}. We want to select an optimal arm given the time horizon $n$. The \textit{simple regret} is the difference between the mean of the arm that the learner selects at time $n$ and the highest possible mean $\bar \mu^*$. The problem of minimizing the simple regret in a multi-armed bandit setting (with finitely many arms) has recently attracted significant attention \citep{even2006action, audibert2010best, kalyanakrishnan2012pac, kaufmann2013information, karnin2013almost, gabillon2012best, jamieson2014lilUCB} and algorithms have been developed either in the setting of a fixed budget which aims at finding an optimal arm or in the setting of a \emph{floating} budget which aims at finding an $\varepsilon$-optimal arm.

All prior work on simple regret considers a fixed number of arms that will be ultimately all explored and cannot be applied to an infinitely many armed bandits or to a bandit problem with the number of arms larger than the available time budget. An example where efficient strategies for minimizing the simple regret of an infinitely many armed bandit are relevant is the search of a good \textit{biomarker} in biology, a single \textit{feature} that performs best on average~\cite{hauskrecht2006fundamentals}. There can be too many possibilities that we cannot afford to even try each of them in a reasonable time. Our setting is then relevant for this special case of \textit{single feature selection}. In this paper, we provide the following results for the simple regret of an infinitely many armed bandit, a problem that was  not considered before.
%of \textit{simple regret} in the infinitely many armed bandit %that add to both the state of the art for infinitely many armed and cumulative regret~\cite{berry1997bandits,wang2008algorithms,bonald2013two-target} and also differ from the results on multi-armed bandit optimizing simple regret~\cite{audibert2010best}.
\begin{itemize}
\item We propose an algorithm that for a fixed horizon~$n$ achieves the finite-time simple regret rate
$$r_n = \cO\left(\max\left(n^{-1/2}, n^{-\frac{1}{\beta}} \polylog n\right)\right).$$
\item We prove corresponding lower bounds for this infinitely many armed simple regret problem, that are matching up to a multiplicative constant for $\beta<2$, and matching up to a $\polylog n$ for $\beta\geq 2$.% and up to a $\polylog n$ for $\beta= 2$.
\item We provide three important extensions:
\begin{itemize}
\item The first extension concerns the case where the distributions of the arms are defined on $[0,1]$ and where~$\bar \mu^* = 1$. In this case, replacing the Hoeffding bound in the confidence term of our algorithm by a Bernstein bound, bounds the simple regret as
\begin{align*}
\hspace{-0.7cm} r_n\!=\!\cO\big(\!\max(\tfrac{1}{n} \polylog n, (n \log n)^{-\frac{1}{\beta}} \polyloglog n \big).
\end{align*}
\item The second extension treats \textit{unknown} $\beta$. We prove that it is possible to estimate $\beta$ with enough precision, so that its knowledge is not necessary for implementing the algorithm. This can be also applied to the prior work~\cite{berry1997bandits,wang2008algorithms} where $\beta$ is also necessary for implementation and optimal bounds.
\item Finally, in the third extension we make the algorithm anytime using known tools.
\end{itemize}
\item We provide simple numerical simulations of our algorithm and compare
it to infinitely many armed bandit algorithms optimizing  cumulative regret and to  multi-armed bandit
algorithms optimizing simple regret.
\end{itemize}

%Again, the rate of our algorithm will be characterized by $\beta$

%In particular, we consider we have a budget of 
%$n$ pulls where we are free to use for the exploration after which we have a to recommend the best arm. The performance 
%of the learner depends solely on this final arm. 

%*** talk about $\beta$-Weibull regularity

%*** 
%The algorithms of \cite{bonald2013two-target} is not optimal in terms of simple regret.  Neither is ours, if we restrict ourselves to the Bernoulli distribution. 
%However, we can easily modify our algorithm by using Bernstein bound instead the of by Hoeffding to get a minimax algorithm
%for the Bernoulli distribution  as well.

Besides research on infinitely many arms bandits, there exist many other settings 
where the number of actions may be infinite. One class of examples is fixed design 
such as linear bandits~\cite{dani2008stochastic} other settings
consider bandits in known or unknown metric space~\cite{kleinberg2008multi,munos2014from,azar2014online}.
These settings assume regularity properties that are very different from the properties 
assumed in the infinitely many arm bandits and give rise to significantly different approaches and results. Furthermore, in classic optimization settings, one assumes that in addition to the rewards, there is side information available through the position of the arms, combined with a smoothness assumption on the reward, which is much more restrictive. On the contrary, we only assume a bound on the proportion of near-optimal arms. It is not always the case that there is side information through a topology on the arms. In such cases, the infinitely many armed setting is applicable while optimization routines are not.

%Furthermore, if there is a topology but it is high dimensional, the smoothness assumptions is less relevant, and it becomes less and less possible to take advantage of it.
\section{Setting}

\paragraph{Learning setting}

Let $\tilde {\mathcal L}$ be a distribution of distributions. We call $\tilde{\mathcal L}$ the \emph{arm reservoir distribution}, i.e.,~the distribution of the means of arms. Let $\mathcal L$ be the distribution of the means of the distributions output by $\tilde {\mathcal L}$, i.e.,~the \textit{mean reservoir distribution}.
% Let $F$ be the associated distribution function. 
%Let $F^{-1}$ the pseudo-inverse of the arm reservoir distribution.
Let $\mathbb A_t$ denote the changing set of $K_t$ arms at time $t$.

 %Let $\mu \sim \tilde \nu$ be a sample from this distribution, and let $\nu_{t,\mu}$ be a distribution of mean $\mu$ that corresponds 

At each time $t+1$, the learner can either choose an arm $k_{t+1}$ among the set of the $K_t$ arms $\mathbb A_t = \{\nu_1, \ldots, \nu_{K_t}\}$ that it has already observed (in this case, $K_{t+1} = K_t$ and $\mathbb A_{t+1} = \mathbb A_t$), or choose to get a sample of a new arm that is generated according to $\tilde {\mathcal L}$ (in this case, $K_{t+1} = K_t+1$ and $\mathbb A_{t+1} = \mathbb A_t \cup \{\nu_{K_t +1}\}$ where $\nu_{K_t +1} \sim \tilde {\mathcal L}$). Let $\mu_i$ be the mean of arm $i$, i.e.,~the mean of distribution $\nu_i$ for $i \leq K_t$. We assume that $\mu_i$ always exists.

In this setting, the learner observes  a sample at each time. At the end of the horizon, which happens at a given time $n$, the learner has to output an arm $\widehat k \leq K_n$, and its performance is assessed by the simple regret$$r_n = \bar \mu^* - \mu_{\widehat k},$$
where $\bar \mu^* = \arginf_{m} \left(\mathbb P_{\mu \sim \cL} (\mu \leq m) = 1\right)$ is the right end point of the domain.% and we assume that this domain is bounded (and that $\bar \mu^*< \infty$).

\paragraph{Assumption on the samples}
The domain of the arm reservoir distribution $\tilde {\mathcal L}$ are distributions of arm samples. We assume that these distributions $\nu$ are bounded.

\begin{assumption}[Bounded distributions in the domain of $\tilde {\mathcal L}$]\label{ass:sample}
Let $\nu$ be a distribution in the domain of $\tilde {\mathcal L}$. Then $\nu$ is a bounded distribution. Specifically, there exists an universal constant $C>0$ such that the domain of $\nu$ is contained in $[-C,C]$.
\end{assumption}
This implies that the expectations of all distributions generated by $\tilde {\mathcal L}$ exist, are finite, and bounded by $C$. In particular, this implies that
$$\bar \mu^* = \arginf_{m} \left(\mathbb P_{\mu \sim \mathcal L} (\mu \leq m) = 1\right)< +\infty,$$
which implies that the regret is well defined, and that the domain of $\mathcal L$ is bounded by $2C$. Note that all the results that we prove hold also for sub-Gaussian distributions $\nu$ and bounded $\mathcal L$. Furthermore, it would possible to relax the sub-Gaussianity using different estimators recently developed for heavy-tailed distributions~\citep{catoni2012challenging}.

\paragraph{Assumption on the arm reservoir distribution}

%We first assume that the domains of the distributions of the arm reservoir distribution, i.e.,~the distributions of the mean of the arms, are bounded. 
%\begin{assumption}[Bounded domain]\label{ass:domain}
%We assume that the domain of $\mathcal L$ is lower bounded and that $\bar \mu^*$ is the supremum of the domain, i.e.
%$$\bar \mu^* = \arginf_{m} \left(\mathbb P_{\mu \sim \mathcal L} (\mu \leq m) = 1\right)< +\infty.$$
%We assume that $M$ bounds the range of the domain. 
%\end{assumption}

We now assume that the mean reservoir distribution $\mathcal L$ has a certain regularity in its right end point, which is a standard assumption for infinitely many armed bandits. Note that this implies that the distribution of the means of the arms is in the domain of attraction of a Weibull distribution, and that it is related to assuming that the distribution is $\beta$ regularly varying in its end point $\bar \mu^*$.
\begin{assumption}[$\beta$ regularity in $\bar \mu^*$]\label{ass:reg}
Let $\beta>0$. There exist $\tilde E,\tilde E'>0$, and $0< \tilde B<1$ such that for any $0\leq \varepsilon \leq \tilde B$,
$$\tilde E' \varepsilon^{\beta} \geq \mathbb P_{\mu \sim \mathcal L}\left(\mu > \bar \mu^* - \varepsilon\right)\geq \tilde E \varepsilon^{\beta}.$$
\end{assumption}
This assumption is the same as the classical one~\eqref{eq:refreg}. Standard bounded distributions satisfy Assumption~\ref{ass:reg} for a specific $\beta$, e.g.,~all the $\beta$ distributions, in particular the uniform distribution, etc.
% Note t
%, and we rewrite it in this way for convenience.

\section{Main results} %: lower bounds, upper bounds with the SiRI algorithm and extensions}

In this section, we first present the information theoretic lower bounds for the infinitely many armed bandits with simple regret as the objective. 
We then present our algorithm and its analysis proving the upper bounds that match the lower bounds --- in some cases, depending on $\beta$, up to a $\polylog n$ factor. This makes our algorithm (almost) \emph{minimax} optimal. Finally, we provide three important extensions as corollaries.
%Moreover, we also provide lower bounds for this problem to show that our algorithm is minimax optimal.

\subsection{Lower bounds}

The following theorem exhibits the \emph{information theoretic complexity} of our problem
and is proved in 
%the full paper~\cite{carpentier2014asimple}.
 Appendix~\ref{proof:lb}. 
 Note that the rates crucially depend on $\beta$.
\begin{theorem}[Lower bounds]\label{thm:lb}

Let us write $\mathcal S_{\beta}$ for the set of distributions of arms distributions $\tilde {\mathcal L}$ that satisfy Assumptions~\ref{ass:sample} and~\ref{ass:reg} for the parameters~$\beta, \tilde E, \tilde E', C$. Assume that $n$ is larger than a constant that depends on $\beta, \tilde E, \tilde E',\tilde B, C$. Depending on the value of~$\beta$, we have the following results, for any algorithm $\mathcal{A}$, where $v$ is a small enough constant.
\begin{itemize}
\item Case $\beta<2$: With probability larger than $1/3$,
\begin{align*}
\inf_{\mathcal{A}} \sup_{\tilde {\mathcal L} \in \mathcal S_{\beta}} r_n &\geq v n^{-1/2}.
\end{align*}

\item Case $\beta \geq 2$: With probability larger than $1/3$,
\begin{align*}
\inf_{\mathcal{A}}\sup_{\tilde {\mathcal L} \in \mathcal S_{\beta}} r_n \geq v n^{-1/\beta}.
\end{align*}

\end{itemize}

\end{theorem}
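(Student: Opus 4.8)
The plan is to build, for each regime of $\beta$, a small family of problem instances in $\mathcal S_\beta$ that are statistically hard to distinguish from the learner's point of view, yet require different arms to be output for small simple regret. I would use a standard two-point (or few-point) Le Cam / Fano style argument, but the twist compared to the finite-armed case is that the hard instances differ not in the identity of a fixed arm but in the \emph{tail behaviour of the reservoir} $\mathcal L$ near $\bar\mu^*$, so the reduction has to account for how many new arms the algorithm has drawn and how many samples it has spent on each.

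First I would fix a base reservoir $\mathcal L_0$ satisfying Assumption~\ref{ass:reg} with the given $\beta,\tilde E,\tilde E'$ and with, say, $\bar\mu^*_0 = 0$ (using arms that are Bernoulli-like or Gaussian-like so Assumption~\ref{ass:sample} holds). For a gap parameter $\Delta>0$ to be optimized, I would construct an alternative reservoir $\mathcal L_\Delta$ that agrees with $\mathcal L_0$ except that a small mass $p(\Delta)\asymp \Delta^\beta$ of the arms is shifted upward so that $\bar\mu^*_\Delta = \bar\mu^*_0 + \Delta$; the mass $p(\Delta)\asymp\Delta^\beta$ is forced precisely by the $\beta$-regularity we must preserve in $\mathcal S_\beta$. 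Under $\mathcal L_0$ the optimal strategy must still suffer regret $\gtrsim\Delta$ unless it has actually sampled one of the shifted ("good") arms, because the output arm under $\mathcal L_\Delta$ must be one of these good arms to have small regret and the two worlds are indistinguishable until such an arm is both drawn and identified. The number of good arms among the $K_n$ drawn is $\mathrm{Binomial}(K_n, p(\Delta))$ (in expectation $K_n\Delta^\beta$), and with $n$ total samples we have $K_n\le n$; moreover to \emph{certify} that a drawn arm is good rather than a slightly-below-$\bar\mu^*$ arm of gap $\Theta(\Delta)$ (which exist under both reservoirs) one needs $\gtrsim \Delta^{-2}$ samples on it, by the usual KL/Hoeffding subexponential lower bound for distinguishing two Bernoulli/Gaussian means $\Delta$ apart. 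So distinguishing the two worlds with constant probability requires either drawing $\gtrsim\Delta^{-\beta}$ arms \emph{and} spending $\gtrsim\Delta^{-2}$ samples confirming one, i.e. a sample budget $n \gtrsim \Delta^{-\beta}$ in the regime where arm-draw cost dominates and $n\gtrsim\Delta^{-2}$ where confirmation cost dominates. Optimizing: set $\Delta\asymp n^{-1/2}$ when $\beta<2$ (confirmation is the bottleneck, and even one good arm costs $\Delta^{-2}=n$ samples to recognize) and $\Delta\asymp n^{-1/\beta}$ when $\beta\ge 2$ (drawing enough good arms is the bottleneck). In each case, on the event that the algorithm has not confirmed a good arm — which has probability $\ge 2/3$ for $n$ above the stated constant by a union bound over the at-most-$n$ drawn arms together with the binomial tail bound $\mathbb P(\mathrm{Bin}(n,\Delta^\beta)=0\text{ or small})$ — the simple regret is $\ge v\Delta$ for a suitable constant $v$, giving the claim.

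For the formal step I would phrase it as a reduction: suppose algorithm $\mathcal A$ achieves $r_n < v\Delta$ with probability $>2/3$ simultaneously on $\mathcal L_0$ and $\mathcal L_\Delta$; then $\mathcal A$ induces a test between the two reservoirs whose error is $<2/3$, but the total variation between the $n$-sample observation laws is bounded by (expected number of good arms drawn)$\times$(per-arm distinguishing TV) $\lesssim \mathbb E[\#\{\text{good arms}\}]\cdot (\Delta\sqrt{t_{\max}})$ where $t_{\max}$ is the max pulls on one arm, and under the budget constraints this is $\le 1/3$ for $\Delta$ chosen as above and $v$ small — contradiction. (Because the "with probability $\ge 1/3$" in the statement refers to the randomness of the reservoir draw and the algorithm, I would actually carry the argument on the high-probability event that few/no good arms are drawn, rather than in expectation, which is cleaner.)

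The main obstacle, I expect, is the bookkeeping of the \emph{arm-selection tradeoff} inside the information-theoretic bound: unlike the fixed-$K$ lower bounds, here $K_n$ is chosen adaptively by the algorithm and is coupled with how samples are allocated, so I need a bound on the distinguishing information that holds uniformly over all adaptive allocations with $\sum_i t_i = n$. Concretely the delicate point is showing that no clever allocation (drawing many arms with few pulls each, versus few arms with many pulls) can beat both thresholds at once — this is exactly where the $\max(n^{-1/2}, n^{-1/\beta})$ shape and the $\beta=2$ phase transition come from, and it requires combining the per-arm KL bound with the binomial tail for the number of good arms in a way that is robust to adaptivity (a chain-rule / Wald-type argument on the KL of the full interaction transcript). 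Handling the $\beta\ge2$ case also needs care that the shifted mass $\Delta^\beta$ does not violate the upper bound $\tilde E'\varepsilon^\beta$ in Assumption~\ref{ass:reg}, which constrains the constants and is why $n$ must exceed a constant depending on $\tilde E,\tilde E',\tilde B$.
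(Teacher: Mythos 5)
Your quantitative intuition is correct --- the per-arm distinguishing cost $\Delta^{-2}$, the reservoir mass $\Delta^{\beta}$, the two bottlenecks, and the crossover at $\beta=2$ are all the right ingredients --- and for $\beta\geq 2$ your ``binomial tail, no good arm is ever drawn'' observation is essentially the paper's entire proof. There, no information-theoretic or two-instance argument is needed at all: for a \emph{single} fixed reservoir satisfying the assumptions, with probability $(1-c_0/n)^n\geq 1-\delta$ none of the at most $n$ arms drawn lands within $E(c_0/n)^{1/\beta}$ of $\bar\mu^*$, so the regret is at least that. The two-reservoir wrapper you build around this is unnecessary in that regime.

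For $\beta<2$, however, your formalization has a genuine gap. With $\Delta\asymp n^{-1/2}$ the shifted mass is $\Delta^{\beta}=n^{-\beta/2}$, so an algorithm drawing $\Theta(n)$ arms sees $\Theta(n^{1-\beta/2})\to\infty$ ``good'' arms: the event ``few or no good arms are drawn'' has vanishing probability, and your transcript-level bound $\mathbb E[\#\{\text{good arms}\}]\cdot\Delta\sqrt{t_{\max}}$ is polynomially large, so the Le Cam test between $\mathcal L_0$ and $\mathcal L_\Delta$ does not close. Two further points are glossed over: (i) moving mass $\Delta^{\beta}$ up to a new endpoint $\bar\mu^*_0+\Delta$ while preserving the \emph{two-sided} regularity $\tilde E\varepsilon^{\beta}\leq\mathbb P(\mu>\bar\mu^*_\Delta-\varepsilon)\leq\tilde E'\varepsilon^{\beta}$ for all $\varepsilon\leq\tilde B$ at the new endpoint is not automatic; and (ii) since the regret is measured against each instance's own $\bar\mu^*$, ``success'' is not the same transcript-measurable event in the two worlds, so closeness of transcript laws does not directly transfer failure. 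The paper avoids all of this by staying inside a single instance: with Gaussian arms of variance $1$, among the at most $n$ reservoir draws there are, with high probability, more arms with gap in $\bigl[E(c_1')^{1/\beta}n^{-1/2},\,E'c_1^{1/\beta}n^{-1/2}\bigr]$ than arms with smaller gap; it then reduces to the oracle problem of telling apart the best arm of the second group from the worst arm of the first, whose KL after at most $n$ samples is at most $n\cdot(E'c_1^{1/\beta}n^{-1/2})^2=(E'c_1^{1/\beta})^2$, so Pinsker caps the success probability at $7/12$ and forces regret of order $n^{-1/2}$ with probability at least $1/3$. To repair your route you should replace the reservoir-versus-reservoir test by this within-instance arm-versus-arm confusion; the adaptive-allocation bookkeeping you rightly worry about then collapses to the elementary bound $\mathrm{KL}\leq\bigl(\sum_i t_i\bigr)\max_i(\mu_i-\mu^*)^2\leq n\Delta^2$.
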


\begin{remark}\label{rem:refim}
\normalfont
Comparing these results with the rates for the cumulative regret problem~\eqref{eq:cumreg} from the prior work, one can notice
that there are two regimes for the cumulative regret results. One regime is 
characterized by a rate of $\sqrt{n}$ for $\beta \leq 1$, 
and the other characterized by a $n^{\beta/(1+\beta)}$ rate for $\beta \geq 1$. Both of these regimes are related to the arm selection tradeoff. The first regime corresponds to \textit{easy} problems where the mean reservoir distribution puts a high mass close to $\bar \mu^*$, which favors sampling a good arm with high mean from the reservoir. In this regime, the $\sqrt{n}$ rate comes from the parametric $1/\sqrt{n}$ rate for estimating the mean of any arm with $n$ samples. The second regime corresponds to \textit{more difficult} problems where the reservoir is unlikely to output a distribution with mean close to $\bar \mu^*$ and where one has to sample many arms from the reservoir. In this case, the $\sqrt{n}$ rate is not reachable anymore because there are too many arms to choose from sub-samples of arms containing good arms.
The same dynamics exists also for the \textit{simple regret}, where 
there are again two regimes, one characterized by a 
$n^{-1/2}$ rate for $\beta \leq 2$, and the other characterized 
by a $n^{-1/\beta}$ rate for $\beta \geq 2$. 
Provided that these bounds are tight (which is the case, up to a $\polylog n$, Section~\ref{ss:uppersiri}), one can see that there is an 
interesting  difference between the cumulative regret problem 
and the simple regret one. Indeed, the change of regime 
is here for $\beta = 2$ and not for $\beta = 1$, i.e.,~the 
parametric rate of $n^{-1/2}$ is valid for larger values of 
$\beta$ for the simple regret. This comes from the fact that 
for the simple regret objective, there is no exploitation 
phase and everything is about exploring. Therefore, an 
optimal strategy can spend more time exploring the set of 
arms and reach the parametric rate also in situations where 
the cumulative regret does not correspond to the parametric 
rate. This has also practical implications examined empirically in Section~\ref{sec:exp}.

\end{remark}

\subsection{SiRI and its upper bounds}
\label{ss:uppersiri}

In this section, we present our algorithm, the Simple Regret for Infinitely many arms (SiRI) and its analysis.

\paragraph{The SiRI algorithm}

Let
$b = \min(\beta,2),$
and let
$$\bar T_{\beta} = \lceil A(n) n^{b/2} \rceil,$$
where
 $$
  A(n)=\begin{cases}
    A, & \text{if $\beta<2$}\\
		A/\log(n)^2, & \text{if $\beta=2$}\\
		A/\log(n), & \text{if $\beta>2$}    
  \end{cases}
$$

where $A$ is a small constant whose precise value will depend on our analysis.
Let $\log_2$ be the logarithm in base 2. Let us define
$$\bar t_{\beta} = \lfloor \log_2(\bar T_{\beta}) \rfloor.$$

Let $T_{k,t}$ be the number of pulls of arm $k \leq K_t$, and $X_{k,u}$ for the $u$-th sample of $\nu_k$. 
The empirical mean of the samples of arm $k$ is defined as
$$\widehat \mu_{k,t} = \frac{1}{T_{k,t}} \sum_{u=1}^{T_{k,t}} X_{k,u}.$$
With this notation, we provide SiRI as Algorithm~\ref{alg:siri}.

\begin{algorithm}[t]
\begin{algorithmic}
\STATE {\bf Parameters:} $\beta, C, \delta$
\STATE \textbf{Initial pull of arms from the reservoir:} 
\STATE Choose $\bar T_{\beta}$ arms from the reservoir $\tilde {\mathcal L}$ .
\STATE Pull each of $\bar T_{\beta}$ arms once.
\STATE $t \gets \bar T_{\beta}$

\STATE \textbf{Choice between these arms:}

\WHILE{$t \leq n$}
\STATE For any $k \leq \bar T_{\beta}$: 
\begin{align}
B_{k,t} \gets  \widehat \mu_{k,t} &+  2\sqrt{\frac{C}{T_{k,t}}\log\big(2^{2\bar t_{\beta}/b}/(T_{k,t}\delta)\big)}\nonumber\\ &+ 
\frac{2C}{T_{k,t}}\log\left(2^{2\bar t_{\beta}/b}/(T_{k,t}\delta)\right)\label{eq:UCBSiRI}
\end{align}
\STATE Pull $T_{k,t}$ times the arm $k_t$ that maximizes $B_{k,t}$ and \quad receive $T_{k,t}$ samples from it.
\STATE $t\leftarrow t+ T_{k,t}$
\ENDWHILE
\STATE \textbf{Output:} Return the most pulled arm $\widehat k$.
\end{algorithmic}
\caption{SiRI \\ \hspace*{0.5cm}
\textit{Simple Regret for Infinitely Many Armed Bandits}
\label{alg:siri}}
\end{algorithm}

\paragraph{Discussion}
% \normalfont
SiRI is a UCB-based algorithm, where the leading confidence term is of order
\[ \sqrt{\frac{\log\left(n/(\delta T_{k,t})\right)}{T_{k,t}}}\cdot \]
Similar to the MOSS algorithm~\cite{audibert2009minimax}, we divide the $\log(\cdot)$ term by $T_{k,t}$, in order to avoid additional logarithmic factors in the bound. But a simpler algorithm with a confidence term as in a classic UCB algorithm for cumulative regret,
\[\sqrt{\frac{\log(n/\delta)}{T_{k,t}}}\text{,}\]
would provide almost optimal regret, up to a $\log n$, i.e.,~with a slightly worse regret than what we get. It is quite interesting that with such a confidence term, SiRI  is optimal for minimizing the \textit{simple} regret for infinitely many armed bandits, since MOSS, as well as the classic UCB algorithm, targets the cumulative regret. The main difference between our strategy and the cumulative strategies~\cite{berry1997bandits,wang2008algorithms,bonald2013two-target} is in the number of arms sampled from the arm reservoir:  For the simple regret, we need to sample more arms. Although the algorithms are related, their analyses are quite different: Our proof is \textit{event-based} whereas the proof for the cumulative regret targets \textit{directly the expectations}.

It is also interesting to compare SiRI with existing algorithms targeting the simple regret for finitely many arms, as the ones by~\citet{audibert2010best}. SiRI can be related to their UCB-E with a specific confidence term  and a specific choice of the number of arms selected. Consequently, the two algorithms are related but the regret bounds obtained for UCB-E are not informative when there are infinitely many arms. Indeed, the theoretical performance of UCB-E is decreasing with the sum of the inverse of the gaps squared, which is infinite when there are infinitely many arms. In order to obtain a useful bound in this case, we need to consider a more refined analysis which is the one that leads to Theorem~\ref{thm:ub}.% Moreover, UCB-E needs, in order to be optimal, to be calibrated using the sum of the inverse of the gaps squared of the sampled arms, which is an unknown, problem dependent quantity.
% \end{remark}

\begin{remark}
\normalfont
Note that SiRI pulls series of samples from the same arm without updating the estimate which may seem wasteful.
In fact, it is possible to update the estimates after each pull.  
On the other hand, SiRI is already minimax optimal, so one can only hope to get  improvement in constants. 
Therefore, we present this version of SiRI, since its analysis is easier to follow. 
\end{remark}

%\subsection{Main result}

\paragraph{Main result}
We now state the main result which characterizes SiRI's simple regret according to $\beta$.

\begin{theorem}[Upper bounds]\label{thm:ub}
%\label{thm:upper}
Let $\delta>0$. Assume all Assumptions~\ref{ass:sample} and~\ref{ass:reg} of the model and that $n$ is larger than a large constant that depends on $\beta, \tilde E, \tilde E', \tilde B, C$. Depending on the value of $\beta$, we have the following results, where $E$ is a large enough constant.
\begin{itemize}
\item Case $\beta<2$: With probability larger than $1-\delta$,
\begin{align*}
r_n &\leq E n^{-1/2}  \log(1/\delta) (\log(\log(1/\delta)))^{96} \sim n^{-1/2}.
\end{align*}

\item Case $\beta> 2$: With probability larger than $1-\delta$,
\begin{align*}
r_n &\leq E (n\log(n))^{-1/\beta}  (\log(\log(\log(n)/\delta)))^{96} \times\\
 & \times\log(\log(n)/\delta) \sim (n\log n)^{-1/\beta} \polyloglog n.
\end{align*}

\item Case $\beta = 2$: With probability larger than $1-\delta$,
\begin{align*}
r_n &\leq E \log(n) n^{-1/2}  (\log(\log(\log(n)/\delta)))^{96}\times\\
&\times\log(\log(n)/\delta) \sim n^{-1/2} \log n  \polyloglog n.
\end{align*}
\end{itemize}
\end{theorem}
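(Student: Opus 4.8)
The plan is to condition on two high-probability events and decompose $r_n=(\bar\mu^*-\mu_{k^+})+(\mu_{k^+}-\mu_{\widehat k})$, where $k^+$ is a best arm among the $\bar T_\beta$ arms drawn at initialization. The first event, $\xi_{\mathrm{res}}$, concerns those $\bar T_\beta$ arms drawn i.i.d.\ from $\mathcal L$: combining Assumption~\ref{ass:reg} with a binomial Chernoff bound, with probability at least $1-\delta/3$ the empirical \emph{profile} of these arms matches its expectation at every dyadic scale, so that for every dyadic $\varepsilon\in[\varepsilon^*,\tilde B]$ the number of sampled arms with $\bar\mu^*-\mu_k\le\varepsilon$ is of order $\bar T_\beta\varepsilon^\beta$, and in particular $\bar\mu^*-\mu_{k^+}\le\varepsilon^*$. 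Here $\varepsilon^*$ is the target rate, read off from the identity $\bar T_\beta(\varepsilon^*)^\beta\asymp\log(1/\delta)$; substituting the definitions of $\bar T_\beta$ and $A(n)$ turns this into $\varepsilon^*\asymp n^{-1/2}$ for $\beta<2$, $\varepsilon^*\asymp n^{-1/2}\log n$ for $\beta=2$, and $\varepsilon^*\asymp n^{-1/\beta}\polylog n$ for $\beta>2$, up to iterated-log factors. The second event, $\xi$, is a self-normalized concentration event: for every sampled arm $k$ and every round $t$, $\widehat\mu_{k,t}$ lies in the Bernstein confidence interval around $\mu_k$ whose half-width is a constant multiple of the deviation term in~\eqref{eq:UCBSiRI}. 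Since the pull counts $T_{k,t}$ run through powers of two, $\mathbb P(\xi^c)\le\delta/3$ follows from a peeling argument over these dyadic values together with Bernstein's inequality; the factor $T_{k,t}$ in the denominator of the logarithm (the MOSS device of~\citet{audibert2009minimax}) keeps the peeling series summable and lets the union bound over the $\bar T_\beta$ arms be absorbed into the constant $2^{2\bar t_\beta/b}$. The $\polyloglog$ factors in the statement are the price of making this bound uniform over all arms and all pull counts at once.

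\textbf{Consequences of $\xi$.} On $\xi$ each index $B_{k,t}$ is a valid upper confidence bound, because the deviation term in~\eqref{eq:UCBSiRI} dominates the concentration half-width; hence $\max_{k'}B_{k',t}\ge B_{k^+,t}\ge\mu_{k^+}$ at every round. Whenever an arm $k$ is selected we then have $B_{k,t}=\max_{k'}B_{k',t}\ge\mu_{k^+}$, while on $\xi$ also $B_{k,t}\le\mu_k+3\bigl(\sqrt{(C/T_{k,t})\log(\cdot)}+(C/T_{k,t})\log(\cdot)\bigr)$, so the gap $\Delta_k:=\mu_{k^+}-\mu_k$ bounds the current pull count. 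Using the doubling schedule (an arm's final pull count is at most twice its count when last selected), this yields the per-arm bound $T_{k,n}\lesssim(C/\Delta_k^2)\log\bigl(2^{2\bar t_\beta/b}\Delta_k^2/(C\delta)\bigr)$ for every arm with $\Delta_k\ge\varepsilon^*$.

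\textbf{Budget bookkeeping.} Partition the sampled arms into dyadic bands $\Delta_k\in[2^j\varepsilon^*,2^{j+1}\varepsilon^*)$, $j\ge0$. On $\xi\cap\xi_{\mathrm{res}}$ band $j$ contains $O\bigl(\bar T_\beta(2^j\varepsilon^*)^\beta\bigr)$ arms, each pulled at most $O\bigl((C/(2^j\varepsilon^*)^2)\,\ell_j\bigr)$ times with $\ell_j=\log\bigl(2^{2\bar t_\beta/b}(2^j\varepsilon^*)^2/(C\delta)\bigr)$, so the budget spent on band $j$ is $O\bigl(\bar T_\beta(\varepsilon^*)^{\beta-2}\,2^{j(\beta-2)}C\,\ell_j\bigr)$. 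For $\beta<2$ this series is geometrically decreasing in $j$ and sums to $O(An)$ by the definitions of $\varepsilon^*$ and $\bar T_\beta$; for $\beta=2$ (resp.\ $\beta>2$) the series picks up an extra $\log^2 n$ (resp.\ $\log n$) factor, which is exactly what the $1/\log^2 n$ (resp.\ $1/\log n$) in $A(n)$ cancels, again leaving $O(An)$. Taking $A$ small enough, at most $n/4$ of the budget is spent on arms with gap $>\varepsilon^*$; since initialization uses $\bar T_\beta\le n/4$ pulls, arms with gap $\le\varepsilon^*$ collectively receive at least $n/2$ pulls. There are only $O\bigl(\bar T_\beta(\varepsilon^*)^\beta\bigr)$ of them, so one is pulled $\Omega\bigl(n/(\bar T_\beta(\varepsilon^*)^\beta)\bigr)$ times; by the calibration of $\varepsilon^*$ this number exceeds the per-arm pull bound that holds for every arm with gap $>\varepsilon^*$, so the most-pulled arm $\widehat k$ must itself satisfy $\Delta_{\widehat k}\le\varepsilon^*$. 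Hence $r_n=(\bar\mu^*-\mu_{k^+})+\Delta_{\widehat k}\le2\varepsilon^*$ on $\xi\cap\xi_{\mathrm{res}}$, an event of probability at least $1-\delta$, and plugging in the three values of $\varepsilon^*$ (with the iterated-log factors coming from $\xi$) gives the three claimed rates.

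\textbf{Main obstacle.} The crux is the joint calibration in the third paragraph: $\varepsilon^*$, the number $\bar T_\beta$ of drawn arms (hence the shape of $A(n)$), and the logarithmic term in~\eqref{eq:UCBSiRI} must be tuned simultaneously so that (i) an $\varepsilon^*$-optimal arm is drawn, (ii) the budget wasted on worse arms is a small constant fraction of $n$, and (iii) the eventual winner's confidence radius is $\lesssim\varepsilon^*$. The dyadic sum in (ii) passes from convergent to poly-logarithmically divergent precisely at $\beta=2$, which is both why three regimes appear and why $A(n)$ must carry a compensating $1/\polylog n$ factor for $\beta\ge2$; verifying the $\beta=2$ boundary case and checking that the $\polyloglog$ inflation of the confidence term really suffices for the peeling union bound over all $\bar T_\beta$ arms are the remaining technical points.
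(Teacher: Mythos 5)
Your proposal follows essentially the same route as the paper's proof: your $\xi_{\mathrm{res}}$ and $\xi$ are the paper's events $\xi_1$ and $\xi_2$ (arm-count profile at dyadic scales plus gap-dependent peeled concentration of the empirical means), your calibration of $\varepsilon^*$ is the paper's Corollary~\ref{cor:barm}, and your budget bookkeeping over dyadic gap bands with the $\beta=2$ phase transition absorbed by $A(n)$, followed by the pigeonhole argument on the most-pulled arm, is exactly the argument of Appendix~\ref{ub}. The only point you leave somewhat implicit --- that the union bound in $\xi$ must be weighted by the band counts $N_u$ from the first event and truncated at the gap-dependent pull counts $n_k$ to avoid an extra $\log n$ when $\beta<2$ --- is precisely what the paper's Lemma~\ref{lem:xi2} carries out, and you correctly identify it as the remaining technical step.
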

\begin{proof}[Short proof sketch]
In order to prove the results, the main tools are events $\xi_1$ and $\xi_2$
%~\cite{carpentier2014asimple}. 
(Appendix~\ref{ub}). 
One event controls the number of arms at a given distance from $\bar \mu^*$ and the other one controls the distance between the empirical means and the true means of the arms. 

Provided that events  $\xi_1$ and $\xi_2$ hold, which they do with high probability, we know that there are less than approximately $N_u = \bar T_{\beta} 2^{-u}$ arms at a distance larger than $2^{-u/\beta}$ from $\bar \mu^*$, and that each arm that is at a distance larger than $2^{-u/\beta}$ from $\bar \mu^*$ will be pulled less than $P_u = 2^{2u/\beta}$ times. After these many pulls, the algorithm recognizes that it is suboptimal.

Since a simple computation yields% rate of the simple regret is then of order $r_n = \frac{1}{T_{\beta}^{1/\beta}}$, since %where $U^*$ is the largest $U^*$ such that
$$\sum_{0 \leq u \leq \log_2(\bar T_{\beta})} N_u P_u  \leq \frac{n}{C},$$
we know that all the suboptimal arms at a distance further than $2^{-\log_2(\bar T_{\beta})/\beta}$ from the optimal arm are discarded since they are all sampled enough to be proved suboptimal. We thus know that an arm at a distance less than $2^{-\log_2(T_{\bar \beta})/\beta}$ from the optimal arm is selected in high probability, which concludes the proof.

The full proof%
%~\cite{carpentier2014asimple} 
(Appendix~\ref{ub}) 
is quite technical, since it uses a peeling argument to correctly define the high probability event to avoid a suboptimal rate, in particular in terms of $\log n$ terms for $\beta<2$, and since we need to control accurately the number of arms at a given distance from $\bar \mu^*$ at the same time as their empirical means.
%The difficulty of the proof is to define correctly the event of high probability, in order to avoir  
\end{proof}

% \begin{remark}
\paragraph{Discussion}
% \normalfont
The bound we obtain is minimax optimal for $\beta<2$ \textit{without additional $\log n$ factors}. We emphasize it since the previous results on infinitely many armed bandits give results which are optimal up to a $\polylog n$ factor for the cumulative regret,  except the one by~\citet{bonald2013two-target} which considers a very specific and fully parametric setting. 
For $\beta\geq 2$, our result is optimal up to a $\polylog n$ factor. We conjecture that the lower bound of Theorem~\ref{thm:lb} for $\beta \geq 2$ can be improved to $(\log(n)/n)^{1/\beta}$ and that SiRI is actually optimal up to a $\polyloglog(n)$ factor for $\beta > 2$.\\
 %And for $\beta = 2$, it is optimal also up to a $\polylog n$ factor- in some sense, being optimal in the $\beta = 2$ case is difficult since it is a point of change of regime, see Remark~\ref{rem:refim}.
% \end{remark}
\vspace{-1.5em}
\section{Extensions of SiRI}
We now discuss briefly three extensions of the SiRI algorithm that are very relevant either for practical or computational reasons, or for a comparison with the prior results. In particular,  we consider the cases 1) when $\beta$ is unknown, 2) in a natural setting where the near-optimal arms have a small variance, and 3) in the case of unknown time horizon.
These extensions are all in some sense following from our results and from the existing literature, and we will therefore state them as corollaries.

\subsection{Case of distributions on $[0,1]$ with $\bar \mu^* = 1$}
The first extension concerns the specific setting, particularly highlighted by~\citet{bonald2013two-target} but also presented by~\citet{berry1997bandits} and \citet{wang2008algorithms}, where the domain of the distributions of the arms are included in $[0,1]$ and where $\bar \mu^* = 1$. In this case, the information theoretic complexity of the problem is smaller than the one of the general problem stated in Theorem~\ref{thm:lb}. Specifically, the variance of the near-optimal arms is very small, i.e.,~in the order of $\varepsilon$ for an $\varepsilon$-optimal arm. This implies a better bound, in particular, that the parametric limitation of $1/\sqrt{n}$ can be circumvented. In order to prove it, the simplest way is to modify SiRI into Bernstein-SiRI, displayed in~Algorithm~\ref{alg:sirimber}. It is an \textit{Empirical Bernstein-modified SiRI} algorithm that accommodates the situation of distributions of support included in $[0,1]$ with $\bar \mu^* = 1$. Note that in the general case, it would provide similar results as what is provided in Theorem~\ref{thm:ub}.

\begin{algorithm}[t]
\begin{algorithmic}
\STATE {\bf Parameters:} $C, \beta, \delta$

\STATE \textbf{Newly defined quantities:} 

\STATE Set the number of arms as
\begin{align*}%\label{eq:Tb2}
\bar T_{\beta} = \lceil \min(n/\log(n), A(n) n^{\beta/2})\rceil,
\end{align*}
\STATE Modify the SiRI algorithm's UCB~\eqref{eq:UCBSiRI} with
\begin{align*}%\label{eq:UCB2}
B_{k,t} &\gets \widehat \mu_{k,t}+  2\widehat \sigma_{k,t}\sqrt{\frac{C}{T_{k,t}}\log\big(2^{2\bar t_{\beta}/b}/(T_{k,t}\delta)\big)}\\ &+ 
\frac{4C}{T_{k,t}}\log\left(2^{2\bar t_{\beta}/b}/(T_{k,t}\delta)\right),
\end{align*}
where $\widehat \sigma_{k,t}^2$ is the empirical variance, defined as
\[
\widehat \sigma_{k,t}^2 = \frac{1}{T_{k,t}} \sum_{l = 1}^{T_{k,t}} \left(X_{k,t} - \widehat \mu_{k,t}\right)^2.
\]
\STATE \textbf{Call SiRI:}
\STATE Run SiRI on the samples using these new parameters
\end{algorithmic}
\caption{Bernstein-SiRI}
\label{alg:sirimber}
\end{algorithm}

A similar idea was already introduced by~\citet{wang2008algorithms} in the infinitely many armed setting for \textit{cumulative regret}. The idea is that the confidence term is more refined using the empirical variance and hence it will be very large for a 
near-optimal arm, thereby enhancing exploration. Plugging this term in the proof, conditioning on the event of high probability, such that $\widehat \sigma_{k,t}^2$ is close to the true variance, and using similar ideas as~\citet{wang2008algorithms}, we can immediately deduce the following corollary.
\begin{corollary}\label{cor:bern}
Let $\delta>0$. Assume  Assumptions~\ref{ass:sample} and~\ref{ass:reg} of the model and that $n$ is larger than a large constant that depends on $\beta, \tilde E, \tilde E', \tilde B, C$. Furthermore, assume that all the arms have distributions of support included in $[0,1]$ and that $\bar \mu^* = 1$. Depending on~$\beta$, we have the following results for Bernstein-SiRI. %, where $E$ is a large enough constant. 
\begin{itemize}
\item Case $\beta\leq 1${\rm :} The order of the simple regret is with high probability
\begin{align*}
r_n = \cO\left(\tfrac{1}{n} \polylog n\right).
\end{align*}
\item Case $\beta> 1${\rm :} The order of the simple regret is with high probability
\begin{align*}
r_n = \cO\left(\left(\tfrac{1}{n}\right)^{1/\beta} \polylog n \right).
\end{align*}
Moreover, the rate
$$\max\left(\frac{1}{n},\left(\tfrac{\log n}{n}\right)^{1/\beta}\right),$$
is minimax-optimal for this problem, i.e.,~there exists no algorithm that achieves a better simple regret in a minimax sense.
\end{itemize}

%The the simple regret of the modified SiRI algorithm can be bounded with probability larger than $1-\delta$ as
\end{corollary}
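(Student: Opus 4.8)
The plan is to mirror the proof of Theorem~\ref{thm:ub} but with the Hoeffding-type confidence width replaced by the empirical-Bernstein width appearing in Bernstein-SiRI, exploiting the key structural fact that under the assumption $\bar\mu^* = 1$ with support in $[0,1]$, an arm whose gap to the optimum is $\varepsilon$ has variance at most of order $\varepsilon$ (since for a $[0,1]$-valued random variable with mean $1-\varepsilon$ one has $\mathrm{Var} \le \E[X] - \E[X]^2 \le (1-\varepsilon)\varepsilon \le \varepsilon$). First I would set up, exactly as in the theorem, the two high-probability events: $\xi_1$ controlling, via Assumption~\ref{ass:reg}, the number of sampled arms within distance $2^{-u/\beta}$ of $\bar\mu^*$ to be at most roughly $N_u = \bar T_\beta 2^{-u}$; and a Bernstein analogue $\xi_2$ controlling simultaneously the deviations of the empirical means \emph{and} the empirical variances $\widehat\sigma_{k,t}^2$ around their true values, so that on $\xi_2$ the UCB index $B_{k,t}$ is a genuine upper confidence bound and the true variance $\sigma_k^2$ is within a constant factor of $\widehat\sigma_{k,t}^2$ (up to the additive $1/T_{k,t}$ term). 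I would invoke that the needed empirical-Bernstein concentration is standard (this is precisely the estimate used by \citet{wang2008algorithms}), so $\xi_1 \cap \xi_2$ holds with probability at least $1-\delta$ for the appropriate choice of the logarithmic arguments in $B_{k,t}$.

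Next I would redo the ``pull-count'' bookkeeping. On $\xi_1 \cap \xi_2$, an arm at distance $\Delta \approx 2^{-u/\beta}$ from $\bar\mu^*$ has variance $\sigma^2 = \cO(2^{-u/\beta})$, so the Bernstein width $\sqrt{\sigma^2 \log(\cdot)/T} + \log(\cdot)/T$ shrinks below $\Delta/2$ already after $P_u = \cO\big(\sigma^2/\Delta^2 + 1/\Delta\big) = \cO\big(2^{-u/\beta}/2^{-2u/\beta} + 2^{u/\beta}\big) = \cO(2^{u/\beta})$ pulls — this is the single place where the $[0,1]$/$\bar\mu^*=1$ structure buys a factor: the pull count is $2^{u/\beta}$ rather than the $2^{2u/\beta}$ of Theorem~\ref{thm:ub}. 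Then I would check the budget constraint: with $\bar T_\beta = \lceil\min(n/\log n, A(n) n^{\beta/2})\rceil$, the sum $\sum_{0\le u\le \log_2 \bar T_\beta} N_u P_u = \sum_u \bar T_\beta 2^{-u} 2^{u/\beta}$ is, for $\beta \ge 1$, dominated by its top term and is $\cO(\bar T_\beta \cdot \bar T_\beta^{1/\beta - 1}) = \cO(\bar T_\beta^{1/\beta})$ up to logs; one then verifies this is at most $n/C$ for the stated choice of $\bar T_\beta$, so every arm at distance exceeding $2^{-(\log_2 \bar T_\beta)/\beta}$ from $\bar\mu^*$ gets pulled enough to be recognized as suboptimal and is discarded. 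Consequently the most-pulled arm $\widehat k$ lies within $2^{-(\log_2\bar T_\beta)/\beta} = \bar T_\beta^{-1/\beta}$ of $\bar\mu^*$; substituting $\bar T_\beta = \Theta(\min(n/\log n,\, n^{\beta/2}/\polylog n))$ gives, for $\beta > 1$, $r_n = \cO\big(\max(1/n,\, (\log n/n)^{1/\beta})\,\polylog n\big)$ — which simplifies to the two displayed cases depending on whether $n/\log n$ or $n^{\beta/2}$ is the active term, i.e.\ whether $\beta \le 1$ or $\beta > 1$. For $\beta \le 1$ the active cap is $\bar T_\beta \approx n/\log n$, giving $r_n = \cO(1/n \cdot \polylog n)$ after the $1/\bar T_\beta$ rescaling absorbs the logarithm appropriately.

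For the matching lower bound in the case $\beta>1$, I would adapt the construction in the proof of Theorem~\ref{thm:lb} (Appendix~\ref{proof:lb}) to $[0,1]$-supported arms with $\bar\mu^*=1$: take a reservoir that is a mixture putting mass $\approx \varepsilon^\beta$ on Bernoulli arms of mean $1-\varepsilon$ (which automatically have the small variance $\varepsilon(1-\varepsilon)$), and run the standard change-of-measure / two-point argument, now with a KL divergence between near-optimal Bernoullis scaling like $\varepsilon^2/\mathrm{Var} \asymp \varepsilon$ rather than $\varepsilon^2$. Balancing the number of distinguishable arms $\asymp \varepsilon^{-\beta}$ against the sample budget $n$, with each near-optimal arm needing $\asymp 1/\varepsilon$ pulls to be told from optimal, yields the threshold $\varepsilon \asymp \max(1/n, (\log n / n)^{1/\beta})$, matching the upper bound; the $\log n$ appears exactly as in the classical infinitely-many-armed lower bounds because one must pay for sampling enough fresh arms to encounter a near-optimal one. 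I expect the main obstacle to be the clean handling of the Bernstein event $\xi_2$: one must control $\widehat\sigma_{k,t}^2$ and $\widehat\mu_{k,t}$ \emph{uniformly} over all $\bar T_\beta$ arms and all dyadic pull counts via a peeling argument, while keeping the variance-dependent width self-bounding (the true $\sigma^2$ is not known, only $\widehat\sigma^2$), so that the additive $1/T$ correction is genuinely lower-order — this is the technical heart, and it is where invoking the analysis template of \citet{wang2008algorithms} together with the peeling already done for Theorem~\ref{thm:ub} does most of the work.
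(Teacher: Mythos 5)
Your overall strategy is the same as the paper's: the paper's own proof of this corollary is only a two-sentence pointer (``follows immediately from the proof of Theorem~\ref{thm:ub} using the empirical Bernstein bound as by \citet{wang2008algorithms}''; the lower bound combines a $1/n$ two-point Bernoulli argument with the $n^{-1/\beta}$ bound inherited from the proof of Theorem~\ref{thm:lb}), and you correctly isolate the one substantive new ingredient: for $[0,1]$-valued arms with $\bar\mu^*=1$, an arm with gap $\Delta$ has variance at most $\Delta$, so the empirical-Bernstein width eliminates a level-$u$ arm after only $P_u\asymp 2^{u/\beta}$ pulls instead of $2^{2u/\beta}$, and the parametric floor drops from $n^{-1/2}$ to $n^{-1}$.

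However, your quantitative bookkeeping around $\bar T_\beta$ contains errors that, followed literally, do not yield the stated rates. First, in $\sum_u \bar T_\beta 2^{-u}2^{u/\beta}=\bar T_\beta\sum_u 2^{u(1/\beta-1)}$ the dominant term for $\beta>1$ is $u=0$ (the exponent is negative), so the sum is $\Theta(\bar T_\beta)$, not $\cO(\bar T_\beta^{1/\beta})$; the top term dominates only when $\beta<1$. Second, you have the branches of $\min(n/\log n,\,A(n)n^{\beta/2})$ backwards: $n^{\beta/2}$ is the smaller term precisely when $\beta<2$. Taken at face value this gives, for $1<\beta<2$, $\bar T_\beta\asymp n^{\beta/2}$ and hence a best sampled arm only at distance $\bar T_\beta^{-1/\beta}=n^{-1/2}$, contradicting the claimed $n^{-1/\beta}$; and for $\beta<1$ your choice $\bar T_\beta\approx n/\log n$ violates your own budget constraint, since the (then top-term-dominated) sum is $\bar T_\beta^{1/\beta}\gg n$. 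To make the accounting close you need $\bar T_\beta\asymp\min(n/\log n,\;c\,n^{\beta})$ --- the exponent $\beta/2$ in Algorithm~\ref{alg:sirimber} is evidently carried over from the Hoeffding case and must be read as $\beta$ here --- together with capping the per-arm pull count at the $1/n$ resolution when $\beta\le 1$ (arms within $1/n$ of $\bar\mu^*$ need never be eliminated and contribute at most $1/n$ to the regret). With those corrections your argument does produce $1/n$ for $\beta\le 1$ and $(\log n/n)^{1/\beta}$ for $\beta>1$. Finally, on the lower bound your $1/n$ two-point Bernoulli argument matches the paper's, but your claim to also extract the $\log n$ factor inside $(\log n/n)^{1/\beta}$ is asserted rather than proved: the paper only establishes $n^{-1/\beta}$ (via the ``no arm lands in the top interval'' event from Theorem~\ref{thm:lb}) and settles for optimality up to a $\polylog n$ factor, so you should either supply that refined argument or weaken the claim accordingly.
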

The proof follows immediately from the proof of Theorem~\ref{thm:ub} using the empirical Bernstein bound as by~\citet{wang2008algorithms}. Moreover, the lower bounds' rates follow directly from the two facts: 1) $1/n$ is clearly a lower bound, and therefore optimal for $\beta <1$, since it takes at least $n$ samples of a Bernoulli arm that is constant times $1/n$ suboptimal, in order to discover that it is not optimal, and 2) $n^{-1/\beta}$ can be trivially deduced from Theorem~\ref{thm:lb}\footnote{Indeed, its proof shows that a lower bound of the order of $n^{-1/\beta}$ is valid for any distribution and in particular for Bernoulli with mean $\mu$ and $\bar \mu^* = 1$, which is a special case of distributions of support included in $[0,1]$ and that $\bar \mu^*=1$.}. Bernstein-SiRI is thus minimax optimal for $\beta \geq 1$ up to a $\polylog n$ factor.

\paragraph{Discussion}
% \begin{remark}
% \normalfont
%Unlike what is described in Remark~\ref{rem:refim}, the point of change of regime for Corollary~\ref{cor:bern} is $\beta = 1$, similar to what happens in the case of cumulative regret.
Corollary~\ref{cor:bern} improves the results of  Theorem~\ref{thm:ub} when $\beta \in (0,2)$. For these $\beta$, it is possible to beat the parametric  rate of $1/\sqrt{n}$, since in this case, the variance of the arms decays with the quality of the arms.
In this situation, for $\beta<2$, it is possible to beat the parametric rate $1/\sqrt{n}$ and keep the rate of $n^{-1/\beta}$ until $\beta \leq 1$, where the limiting rate of $1/n$ imposes its limitations: the regret cannot be smaller than the second order parametric rate of $1/n$. Here, the change point of regime is~$\beta = 1$ which differs from the general simple regret case but is the same as the general case of cumulative regret as discussed in Remark~\ref{rem:refim}. Notice that this comes from the fact that the limiting rate is now $1/n$ and not for same reasons as for the cumulative regret.
% \end{remark}

% \begin{remark}
% \normalfont
%Bernstein-SiRI is minimax optimal for $\beta \geq 1$ up to a $\polylog n$ factor.
% \end{remark}

\subsection{Dealing with unknown $\beta$}

In practice, the parameter $\beta$ is almost never available. Yet its knowledge is crucial for the implementation of SiRI, as well as for all the cumulative regret strategies described in~\cite{berry1997bandits,wang2008algorithms,bonald2013two-target}. Consequently, a very important question is whether it is possible to estimate it well enough to obtain good results, which we answer in the affirmative.

An interesting remark is that Assumption~\ref{ass:reg} is actually related to assuming that the 
distribution function $\cL$ is $\beta$ regularly varying in $\bar \mu^*$. 
Therefore, $\beta$ is the tail index of the distribution function of $\cL$ and can be estimated with tools from extreme value theory~\cite{dehaan2006extreme}. Many estimators exist for estimating this tail index $\beta$, 
for instance, the popular Hill's estimate~\cite{hill1975simple}, but also Pickand's' estimate~\cite{pickland1975statistical} and others.

However, our situation is slightly different from the one where the convergence of these estimators is proved, as the means of the arms are not directly observed.  As a result, we propose another estimate, related to the estimate of~\citet{carpentier2013adaptive}, which accommodates our setting. %We assume that prior to running the algorithm SiRI, we do a preliminary learning phase for $\beta$ of length $N^2$ (where $N$ will be of order $\sqrt{n/2}$ as we will see later). The algorithm SiRI is then run on the remaining $n - N^2$ samples.
Assume that we have observed $N$ arms, and that all of these arms have been sampled~$N$ times. Let us write $\widehat m_k$ for the empirical mean estimates of the mean $m_k$ of these $N$ arms and define 
$$ \widehat m^* = \max_k \widehat m_k.$$
We further define
$$\widehat p = \frac{1}{N} \sum_{k=1}^N \mathbf 1\{\widehat m^* - \widehat m_k \leq N^{-\varepsilon}\}$$
and set
\begin{equation}\label{eq:compbeta}
\widehat \beta = - \frac{ \log \widehat p}{\varepsilon \log N}\cdot
\end{equation}
This estimate satisfies the following \textit{weak} concentration inequality 
and its proof is in 
%the full paper~\cite{carpentier2014asimple}.
Appendix~\ref{s:proofbeta}.
\begin{lemma}\label{lem:beta}
Let $\underline{\beta}$ be a lower bound on $\beta$.
If Assumptions~\ref{ass:sample} and~\ref{ass:reg} are satisfied  and 
%if $\varepsilon$ is such that  
$\varepsilon < \min(\underline{\beta}, 1/2, 1/(\underline{\beta}))$, then  with probability larger than $1-\delta$, for~$N$ larger than a constant that depends only on $\tilde B$ of Assumption~\ref{ass:reg},
\begin{align*}
|\widehat \beta - \beta | &\leq \frac{\frac{\delta^{-1/\underline{\beta}}}{\underline{\beta}} + \sqrt{\log(\frac{1}{\delta})} + \max(1,\log(\tilde E'),|\log(\tilde E)|)}{\varepsilon \log N}\\ 
&\leq \frac{c'\max(\sqrt{\log(1/\delta)}, \delta^{-1/\underline{\beta}})}{\varepsilon \log N},
\end{align*}
where $c'>0$ is a constant that depends only on $\varepsilon$ and the parameter $C$ of Assumption~\ref{ass:sample}.
\end{lemma}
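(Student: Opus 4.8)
The plan is to show that $\widehat p$ concentrates, up to multiplicative constants, around $\mathbb{P}_{\mu\sim\mathcal L}(\mu>\bar\mu^*-N^{-\varepsilon})$, which by Assumption~\ref{ass:reg} lies between $\tilde E N^{-\varepsilon\beta}$ and $\tilde E' N^{-\varepsilon\beta}$ (as soon as $N^{-\varepsilon}\le\tilde B$); then $\widehat\beta=-\log\widehat p/(\varepsilon\log N)$ equals $\beta$ plus lower-order terms divided by $\varepsilon\log N$. Three approximations have to be controlled: (i) the empirical means $\widehat m_k$ versus the true means $m_k$, (ii) the empirical maximum $\widehat m^*$ versus the right endpoint $\bar\mu^*$, and (iii) the fluctuation of the empirical count of near-optimal arms around its mean.

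First I would introduce two high-probability events. Let $\eta=C\sqrt{2\log(6N/\delta)/N}$; by Hoeffding's inequality and a union bound over the $N$ arms (each sampled $N$ times, with values in $[-C,C]$), the event $\xi_1=\{\max_k|\widehat m_k-m_k|\le\eta\}$ holds with probability at least $1-\delta/3$. Let $\Delta_0=(\log(3/\delta)/(\tilde E N))^{1/\beta}$; since the $m_k$ are i.i.d.\ from $\mathcal L$ and $\mathbb{P}_{\mu\sim\mathcal L}(\mu>\bar\mu^*-\Delta_0)\ge\tilde E\Delta_0^\beta$ (Assumption~\ref{ass:reg}, valid once $\Delta_0\le\tilde B$), the event $\xi_2=\{\bar\mu^*-\max_k m_k\le\Delta_0\}$ holds with probability at least $1-(1-\tilde E\Delta_0^\beta)^N\ge 1-e^{-N\tilde E\Delta_0^\beta}=1-\delta/3$. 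On $\xi_1\cap\xi_2$ one has $\widehat m^*\in[\bar\mu^*-\Delta_0-\eta,\bar\mu^*+\eta]$, so, writing $\widehat m^*-\widehat m_k=(\widehat m^*-\bar\mu^*)+(\bar\mu^*-m_k)+(m_k-\widehat m_k)$ and plugging in these bounds, one gets the deterministic-threshold sandwich
\[
\tfrac1N\#\{k:\bar\mu^*-m_k\le N^{-\varepsilon}-2\eta\}\le\widehat p\le\tfrac1N\#\{k:\bar\mu^*-m_k\le N^{-\varepsilon}+\Delta_0+2\eta\}.
\]

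Next I would concentrate the two counts. Each is a binomial $\mathrm{Bin}(N,q)$ with $q=\mathbb{P}_{\mu\sim\mathcal L}(\mu\ge\bar\mu^*-s)$ for $s$ equal to $N^{-\varepsilon}-2\eta$ or $N^{-\varepsilon}+\Delta_0+2\eta$; using $\varepsilon<1/2$ (so $\eta N^\varepsilon\to 0$) and $\Delta_0 N^\varepsilon\to 0$ for $N$ large, Assumption~\ref{ass:reg} gives $q=N^{-\varepsilon\beta}c_\pm$ with $c_\pm$ between $\tilde E(1-o(1))$ and $\tilde E'(1+o(1))$, hence $Nq=\Theta(N^{1-\varepsilon\beta})$. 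A multiplicative Chernoff (or Bernstein) bound then keeps both counts within a constant factor of their means on an event $\xi_3$ of probability at least $1-\delta/3$, the relative deviation being of order $\sqrt{\log(1/\delta)/N^{1-\varepsilon\beta}}$. Combined with the sandwich, on $\xi_1\cap\xi_2\cap\xi_3$ this yields $\widehat p=N^{-\varepsilon\beta}c$ with $|\log c|$ bounded by $\max(1,\log\tilde E',|\log\tilde E|)$ plus an error term built from $\eta N^\varepsilon$, $\Delta_0 N^\varepsilon$ and $\sqrt{\log(1/\delta)/N^{1-\varepsilon\beta}}$; moreover, since the maximizing arm is always counted, $\widehat p\ge 1/N$ always, which I keep as the crude fallback $\widehat\beta\le 1/\varepsilon$.

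Taking $-\log$ and dividing by $\varepsilon\log N$ then gives $|\widehat\beta-\beta|=|\log(\widehat p\,N^{\varepsilon\beta})|/(\varepsilon\log N)$, and, using $\log\log(1/\delta)\le\sqrt{\log(1/\delta)}$ and collecting constants, the stated bound follows; the $\delta^{-1/\underline{\beta}}/\underline{\beta}$ term emerges from the weak-concentration regime, where $\sqrt{\log(1/\delta)/N^{1-\varepsilon\beta}}$ is too large to use and one falls back on $1/N\le\widehat p\le 1$, the $N$ forced to be small in that regime together with replacing $\beta$ by its lower bound $\underline{\beta}$ in the worst case producing exactly this contribution. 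A union bound over $\xi_1,\xi_2,\xi_3$ gives total probability at least $1-\delta$; the requirement that $N$ exceed a constant depending on $\tilde B$ is used to ensure $N^{-\varepsilon}+\Delta_0+2\eta\le\tilde B$ (so Assumption~\ref{ass:reg} applies at these thresholds) and $2\eta<N^{-\varepsilon}$. The main obstacle is the bookkeeping of the last two steps: propagating the three independent error sources cleanly through the logarithm and the division by $\varepsilon\log N$ so that the leading constant remains $\max(1,\log\tilde E',|\log\tilde E|)$, correctly isolating the weak-concentration regime that gives the $\delta^{-1/\underline{\beta}}$ term, and making sure the binomials carry growing mass $Nq=\Theta(N^{1-\varepsilon\beta})$, which is what the constraints on $\varepsilon$ (and the fallback otherwise) are there for.
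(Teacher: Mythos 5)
Your proposal follows essentially the same route as the paper's proof: a union-bounded Hoeffding control of $\widehat m_k - m_k$, an extreme-value bound on $\bar\mu^* - \max_k m_k$, a sandwich of $\widehat p$ between empirical counts at deterministic thresholds, concentration of those counts, and Assumption~\ref{ass:reg} applied before taking logarithms. The only differences are cosmetic (multiplicative Chernoff versus the paper's additive Hoeffding combined with the lower bound $p^{\pm}\gtrsim N^{-\beta\varepsilon}$, and the paper obtains the $\delta^{-1/\underline{\beta}}$ term directly from the deviation $v_N$ of $\widehat m^*$ from $\bar\mu^*$ rather than from a fallback regime), so the argument is correct and matches the paper's.
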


\begin{algorithm}[ht]
\begin{algorithmic}
\STATE {\bf Parameters:} $C, \delta, \underline{\beta}$
\STATE \textbf{Initial phase for estimating $\beta$:} 
\STATE Let  $N \gets n^{1/4}$  and $\varepsilon \gets 1/\log\log\log(n)$.
\STATE Sample $N$ arms from the arm reservoir $N$ times
\STATE Compute $\widehat \beta$ following~\eqref{eq:compbeta}
\STATE Set
\begin{equation}\label{eq:be}
\bar \beta \gets \widehat \beta + \frac{c'\max\left(\sqrt{\log(1/\delta)}, \delta^{-1/\underline{\beta}}\right) \logloglog n}{\log n}
\end{equation}

\STATE \textbf{Call SiRI:}

\STATE Run SiRI using $\bar \beta$ instead of $\beta$ with $n - N^2 = n - \sqrt{n}$ remaining samples.

\end{algorithmic}
\caption{$\bar \beta$-SiRI: \textit{$\bar\beta$-modified SiRI for unknown $\beta$}}
\label{alg:sirim}
\end{algorithm}

Let us now modify SiRI in the way as in Algorithm~\ref{alg:sirim}. The knowledge of $\beta$ is not anymore required, and one just needs a lower bound $\underline \beta$ on $\beta$. 
%Let us set $N := \sqrt{n/2}$ and us the $N^2 = n/4$. 
We get $\bar\beta$-SiRI which satisfies the following corollary.
\begin{corollary}\label{cor:beta}
Let the Assumptions~\ref{ass:sample} and~\ref{ass:reg} be satisfied. If~$n$ is large enough with respect to a constant that depends on $\beta, \tilde E, \tilde E', \tilde B, C$, then $\bar\beta-$SiRI satisfies the following:
\begin{itemize}
\item Case $\beta< 2${\rm :} The order of the simple regret is with high probability
\begin{align*}
r_n = \cO\left(\tfrac{1}{\sqrt{n}} \polyloglog n\right).
\end{align*}

\item Case $\beta> 2${\rm :} The order of the simple regret is with high probability
\begin{align*}
r_n = \cO\left(\left(\tfrac{\log n }{n}\right)^{1/\beta} \polyloglog n \right).
\end{align*}

\item Case $\beta =  2${\rm :} The order of the simple regret is with high probability
\begin{align*}
r_n = \cO\left(\tfrac{\log n}{\sqrt{n}} \polyloglog n \right).
\end{align*}
\end{itemize}
\end{corollary}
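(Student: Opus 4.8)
The plan is to show that the estimation phase of Algorithm~\ref{alg:sirim} outputs, with high probability, a parameter $\bar\beta$ that is a \emph{slight overestimate} of $\beta$, and then to re-run the budget argument behind Theorem~\ref{thm:ub} with $\bar\beta$ in place of $\beta$, checking that such an overestimate costs at most an extra $\polyloglog n$ factor. For the first step, note that with $N=n^{1/4}$ and $\varepsilon=1/\logloglog n$ one has $\varepsilon\log N=\tfrac14\log(n)/\logloglog n$, and for $n$ larger than a constant, $\varepsilon<\min(\underline\beta,1/2,1/\underline\beta)$, so Lemma~\ref{lem:beta} applies with confidence parameter $\delta$ (a constant, since the statement only asks for ``with high probability''). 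On the corresponding event $\mathcal E_1$ of probability at least $1-\delta$ we get $|\widehat\beta-\beta|=\cO(\logloglog(n)/\log n)$, hence by the definition~\eqref{eq:be} of $\bar\beta$ the two-sided control
$$0\;\le\;\bar\beta-\beta\;=\;\cO\!\left(\frac{\logloglog n}{\log n}\right),$$
and in particular $n^{(\bar\beta-\beta)/\beta}=\exp(\cO(\logloglog n))=\polyloglog n$ --- this exponent blow-up is the only place the parameter mismatch hurts. Since $\bar\beta-\beta\to0$, for $n$ large (depending on $|2-\beta|$ when $\beta\neq2$) the value $\bar\beta$ lies on the same side of $2$ as $\beta$, so $\min(\bar\beta,2)=\min(\beta,2)$ when $\beta\ge2$, while $\beta=\min(\beta,2)\le\min(\bar\beta,2)=\bar\beta<2$ when $\beta<2$.

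\textbf{Re-running the SiRI analysis with $\bar\beta$ against the true $\beta$.} SiRI is then run on the remaining $n-\sqrt n=\Theta(n)$ samples with $\bar T_{\bar\beta}=\lceil A(n)\,n^{\min(\bar\beta,2)/2}\rceil$ arms, where the branch of $A(n)$ is selected according to $\bar\beta$; by the previous paragraph the threshold inside the confidence width~\eqref{eq:UCBSiRI} still satisfies $2^{2\bar t_{\bar\beta}/\min(\bar\beta,2)}=\Theta(n)$, so the MOSS-type absorption of logarithmic terms goes through unchanged. I would work on the two events $\xi_1,\xi_2$ from the proof of Theorem~\ref{thm:ub} (probability at least $1-\delta$); importantly $\xi_1$ controls the number of the $\bar T_{\bar\beta}$ sampled arms at each distance from $\bar\mu^*$ \emph{via Assumption~\ref{ass:reg}}, i.e.\ through the true $\beta$ only, independently of the $\bar\beta$ used by the algorithm. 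As in that proof, under $\xi_1\cap\xi_2$ there are at most $\cO(\bar T_{\bar\beta}2^{-u})$ arms at distance more than $2^{-u/\beta}$ from $\bar\mu^*$, each pulled at most $\cO(2^{2u/\beta})$ times before being discarded; equating the cumulative work $\sum_{u\le u^\star}\cO(\bar T_{\bar\beta}2^{-u}\,2^{2u/\beta})$ to the budget $\Theta(n)$ fixes the reached depth $u^\star$, and the output (most-pulled) arm has gap $\cO(2^{-u^\star/\beta})$.

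\textbf{The three regimes.} If $\beta\ge2$ the geometric series is summable, the full depth $u^\star=\bar t_{\bar\beta}$ is reached within budget, and the gap is $\cO(\bar T_{\bar\beta}^{-1/\beta})$. For $\beta>2$, $\min(\bar\beta,2)=2=\min(\beta,2)$ and the branch of $A(n)$ is $A/\log n$ for both $\bar\beta$ and $\beta$, so SiRI run with $\bar\beta$ coincides with SiRI run with $\beta$ and Theorem~\ref{thm:ub} (budget $\Theta(n)$) gives $r_n=\cO((n\log n)^{-1/\beta}\polyloglog n)=\cO((\log n/n)^{1/\beta}\polyloglog n)$. For $\beta=2$, $\bar T_{\bar\beta}=\Omega(n/\log^2 n)$ in all sub-cases, so the gap is $\cO((\log^2 n/n)^{1/2})=\cO(\log(n)/\sqrt n)$ and the $\polyloglog n$ factors of Theorem~\ref{thm:ub} carry over. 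If $\beta<2$ the series is dominated by its last term, so equating $\Theta(\bar T_{\bar\beta}\,2^{u^\star(2/\beta-1)})=\Theta(n)$ yields a gap $\cO((\bar T_{\bar\beta}/n)^{1/(2-\beta)})$; substituting $\bar T_{\bar\beta}=\Theta(n^{\bar\beta/2})$ this equals $\cO(n^{-(2-\bar\beta)/(2(2-\beta))})=\cO(n^{-1/2}\,n^{(\bar\beta-\beta)/(2(2-\beta))})=\cO(n^{-1/2}\polyloglog n)$ by the first step. Intersecting $\mathcal E_1$ with $\xi_1\cap\xi_2$ gives the three claimed bounds with probability at least $1-2\delta$, i.e.\ with high probability.

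\textbf{Main obstacle.} The delicate point is this budget re-accounting: unlike in Theorem~\ref{thm:ub}, the budget is no longer exactly calibrated --- feeding the slightly-too-large $\bar\beta$ forces SiRI to sample $\bar T_{\bar\beta}=\Theta(n^{\bar\beta/2})$ arms, and for $\beta<2$ the peeling cost $\bar T_{\bar\beta}^{2/\beta}=n\cdot\polyloglog n$ overshoots $n$, so the algorithm stops short of full depth. One has to re-open the peeling argument and show that this degrades the reachable scale only by the factor $n^{(\bar\beta-\beta)/(2(2-\beta))}=\polyloglog n$ from the first step, and to verify that the $\xi_1$ high-probability bound on the arm counts (hence the $\cO(2^{2u/\beta})$ identification time) is genuinely insensitive to the mismatch between the true $\beta$ and the algorithmic $\bar\beta$; the $\log n$ jump of the $A(n)$ branch at $\beta=2$, and the (easy) negligibility of the $\Theta(\sqrt n)$ cost of the estimation phase relative to $n$, also need the separate bookkeeping sketched above.
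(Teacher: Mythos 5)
Your proposal follows the same route as the paper's own (one-sentence) proof: invoke Lemma~\ref{lem:beta} with $N=n^{1/4}$ and $\varepsilon=1/\logloglog n$ to get $0\le\bar\beta-\beta=\cO(\logloglog n/\log n)$ on a high-probability event, observe that an exponent error of this size costs only $n^{\cO(\logloglog n/\log n)}=\polyloglog n$, and re-run the machinery of Theorem~\ref{thm:ub} with $\bar\beta$ in place of $\beta$. You go considerably further than the paper, which contents itself with the remark that a $1/\log n$ rate in learning $\beta$ modifies the bounds only by a constant factor and never reopens the budget accounting. Your treatment of $\beta<2$ (the peeling cost $\bar T_{\bar\beta}^{2/\beta}=n\,\polyloglog n$ overshoots the budget, the reachable depth is recomputed from $\bar T_{\bar\beta}2^{u^\star(2/\beta-1)}=\Theta(n)$, and the loss is $n^{(\bar\beta-\beta)/(2(2-\beta))}=\polyloglog n$) and of $\beta>2$ (the algorithm literally coincides with the known-$\beta$ one, since $\min(\bar\beta,2)=2$ and the $A/\log n$ branch is selected either way) is correct, and you rightly note that the event $\xi_1$ depends only on the true reservoir and on the number of arms drawn, not on the $\bar\beta$ fed to the algorithm.

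The one place where your bookkeeping is too quick is the boundary case $\beta=2$ --- a defect you share with the paper. On the favorable event $\bar\beta$ lands (generically strictly) above $2$, so Algorithm~\ref{alg:sirim} selects the branch $A(n)=A/\log n$ rather than $A/\log(n)^2$ and draws $\bar T_{\bar\beta}=\Theta(n/\log n)$ arms. Your statement that $\bar T_{\bar\beta}=\Omega(n/\log^2 n)$ only lower-bounds the number of arms, which controls the gap of the best sampled arm; it is the upper bound on $\bar T_{\bar\beta}$ that governs the elimination budget, and for the true $\beta=2$ that budget is $\sum_{u\le\bar t}N_u2^{\tilde n_u}\approx\bar T_{\bar\beta}\,\bar t^2=\Theta(n\log n)$ --- an overshoot by a genuine $\log n$ factor, not $\polyloglog n$. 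Your $\beta<2$ re-accounting degenerates there (division by $2-\beta$), and a direct redo of the peeling shows the provably reachable depth $u^\star$ is only $O(1)$, so the argument as written does not deliver the claimed $\log(n)\,n^{-1/2}\polyloglog n$ in this case. Closing this would require either a safeguard in the branch selection of $A(n)$ when $\bar\beta$ is within $\cO(\logloglog n/\log n)$ of $2$, or a separate argument for the boundary; neither appears in the paper's proof either.
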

% \vspace{-0.5em}
The proof can be deduced easily from Theorem~\ref{thm:ub} using the result from Lemma~\ref{lem:beta}, noting that a $1/\log n$ rate in learning~$\beta$ is fast enough to guarantee that all bounds will only be modified by a constant factor when we use $\widehat \beta$ instead of $\beta$ in the exponent.
% \vspace{-0.5em}
\paragraph{Discussion}
% \begin{remark}
% \normalfont
Corollary~\ref{cor:beta} implies that even in situations with unknown $\beta$, it is 
possible to estimate it accurately enough so that the modified $\bar\beta$-SiRI remains minimax-optimal up to a $\polylog n$, by only using  a lower bound $\underline \beta$ on $\beta$. This is the same that holds for SiRI with known $\beta$. We would like to emphasize that $\bar \beta$ estimate~\eqref{eq:be} of $\beta$ can be used to improve cumulative regret algorithms that need $\beta$, such as the ones by~\citet{berry1997bandits} and \citet{wang2008algorithms}.
 Similarly for these algorithms, one should spend a preliminary phase of $N^2 = \sqrt{n}$ rounds to estimate $\beta$ and then run the algorithm of choice. 
This will modify the cumulative regret rates in the 
general setting by only a $\polyloglog n$ factor, which suggests that our $\beta$ estimation can be useful beyond the scope of this paper. For instance, consider the cumulative regret rate of  UCB-F by~\citet{wang2008algorithms}. If UCB-F uses our estimate of $\beta$ instead of the true $\beta$, it would still satisfy 
$$\EE{R_n}  = \cO\left(\max\left(n^{\frac{\beta}{\beta+1}} \polylog n, \sqrt{n}\polylog n\right) \right).$$

Finally, this modification can be used to prove that this problem is learnable over all mean reservoir distributions with $\beta>0$: This can be seen by setting the lower bound on $\beta$ as $\underline{\beta} = 1/\log\log\log N$, which goes to $0$ but very slowly with $n$. In this case, we only loose a $\log\log(n)$ factor.

% \end{remark}
%, and a relaxation of the classical Hall's condition in extreme value theory Estimating t
% \vspace{-0.5em}
\subsection{Anytime algorithm}

Another interesting question is whether it is possible to make SiRI anytime. This question can be quickly answered in the affirmative. 
First, we can easily just use a doubling trick to double the size of the sample in each period and throw away the preliminary samples that were used in the previous period. 
Second, \citet{wang2008algorithms} propose a more refined way to deal with an unknown time horizon (UCB-AIR), 
that also directly applies to SiRI. Using these modifications it is straightforward to transform SiRI into an anytime algorithm. The simple regret in this anytime setting will only be worsened by a $\polylog n$, where $n$ is the unknown horizon. 
Specifically, in the anytime setting, the regret of SiRI modified either using the doubling trick or by the construction of UCB-AIR has a simple regret that satisfies with high probability
\[r_n = \cO\left(\polylog (n) \max(n^{-1/2},  n^{-1/\beta} \polylog n)\right).\]
 %The only conceptual complication comes if one wants to consider an unknown $\beta$. In this case, the simplest is to do a doubling trick based algorithm that we call Anytime-SIRI (see Figure~\ref{alg:sirimany}).

% \vspace{-0.5em}
\section{Numerical simulations}
\label{sec:exp}

To simulate different regimes of the performance according to 
$\beta$-regularity, we consider different reservoir distributions 
of the arms. In particular, we consider beta distributions ${\rm B}(x,y)$ with as $x=1$ and $y = \beta$. For ${\rm B}(1,\beta)$, the Assumption~\ref{ass:reg} is satisfied precisely with regularity $\beta$.
Since to our best knowledge, SiRI is the first algorithm
optimizing \textit{simple} regret in the infinitely many arms setting, 
there is no natural competitor for it. Nonetheless, in our experiments
we compare to the algorithms designed for linked settings.

First such comparator is  UCB-F~\cite{wang2008algorithms}, an algorithm
that optimizes \textit{cumulative} regret for this setting.
UCB-F is designed for fixed horizon of $n$ evaluations 
and it is an extension of a version of~\mbox{UCB-V} by~\citet{audibert2007tuning}.
%Second, we compare SiRI to UCB-E~\cite{audibert2010best}, an algorithm
%also designed for the simple regret, but for a \textit{finite set of arms}. 
%For a comparable evaluation, we set the number of arms exactly to the
%same value as for SiRI. Note that UCB-E recommendation strategy selects
%the arm with the \textit{highest empirical regret} as opposed to the arm
%that was pulled most often. The purpose of comparison 
%of with UCB-E it to show that SiRI performs at par with UCB-E equipped 
%with the oracle. In particular, we give UCB-E the optimal number of $\bar T_\beta$
%arms and given the means of those arms, we give the exact information
%of the complexity constant~$H_1$~\cite{audibert2010best}. 
Second, we compare SiRI to lil'UCB~\cite{jamieson2014lilUCB} designed 
for the best-arm identification in the fixed confidence setting. 
The purpose of comparison with lil'UCB is to show that SiRI performs at par with 
lil'UCB equipped with  the optimal number of $\bar T_\beta$ arms. 
In all our experiments, we set constant $A$ of SiRI to $0.3$,  
constant $C$ to 1, and confidence~$\delta$ to 0.01.

All the experiments have some specific beta distribution as a reservoir and 
the arm pulls are noised with $\cN(0,1)$ truncated to $[0,1]$.  
We perform 3 experiments based on different regimes of $\beta$ coming from our analysis: 
$\beta<2$, $\beta=2$, and $\beta>2$. In the first experiment (Figure~\ref{fig:beta1}, left) we take $\beta=1$, i.e.,~${\rm B}(1,1)$ which is just a uniform distribution. In the second experiment (Figure~\ref{fig:beta1}, right) we consider ${\rm B}(1,2)$ as the reservoir. Finally, Figure~\ref{fig:beta3} features
the experiments for ${\rm B}(1,3)$.  The first obvious observation confirming the analysis is that higher $\beta$ leads to a more difficult problem. Second, UCB-F 
performs well for $\beta=1$, slightly worse for $\beta=2$, 
and much worse for $\beta=3$. This empirically
confirms our discussion in Remark~\ref{rem:refim}.
Finally, SiRI performs empirically as well as lil'UCB equipped 
with the optimal number of arms and the same confidence~$\delta$.
\begin{figure}
\begin{center}
%\vspace{-0.75em}
%\includegraphics[width=0.49\columnwidth]{fig/new6compbeta1}
%\includegraphics[width=0.49\columnwidth]{fig/new6compbeta2}
\includegraphics[width=0.49\columnwidth]{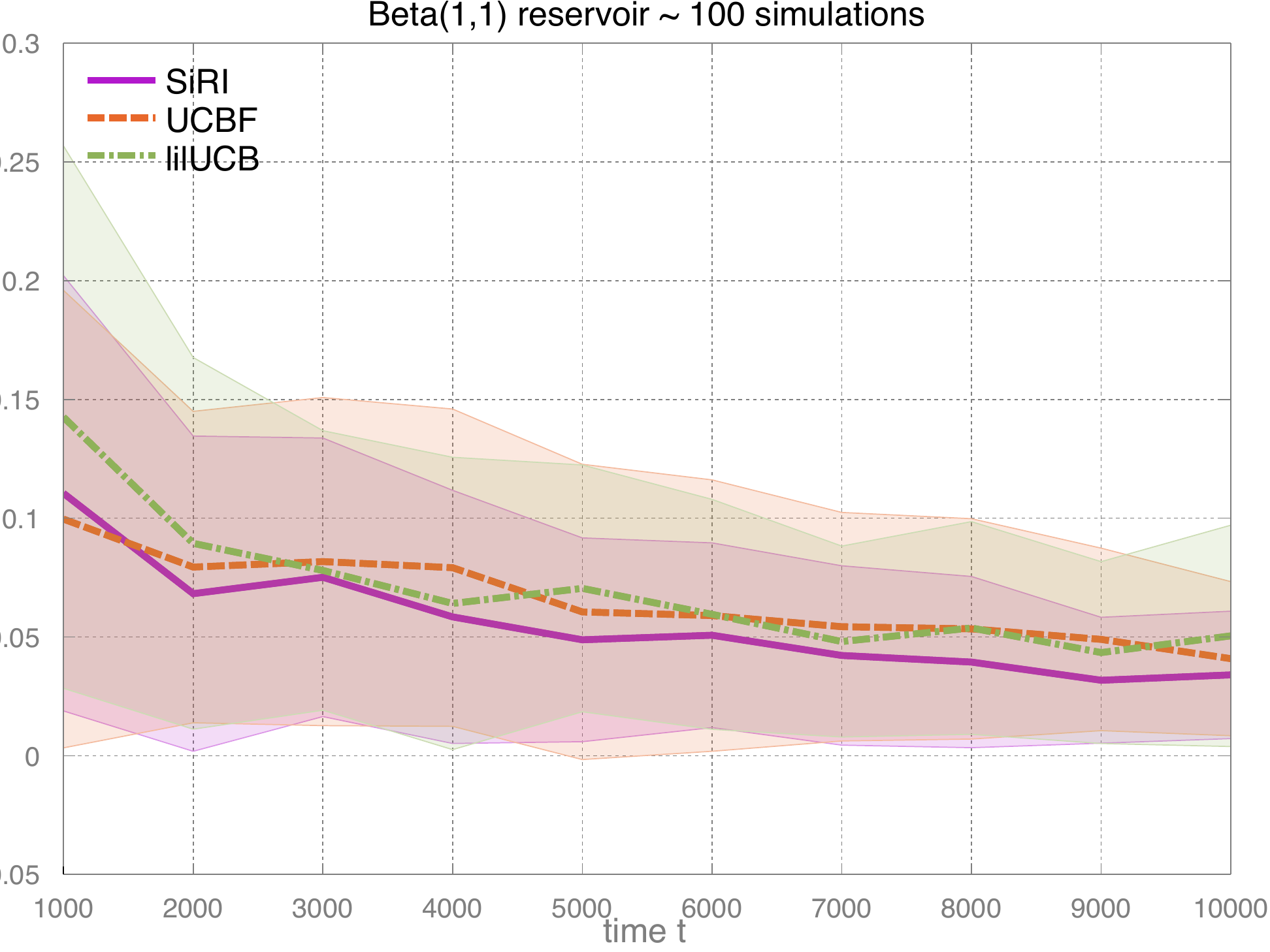}
\includegraphics[width=0.49\columnwidth]{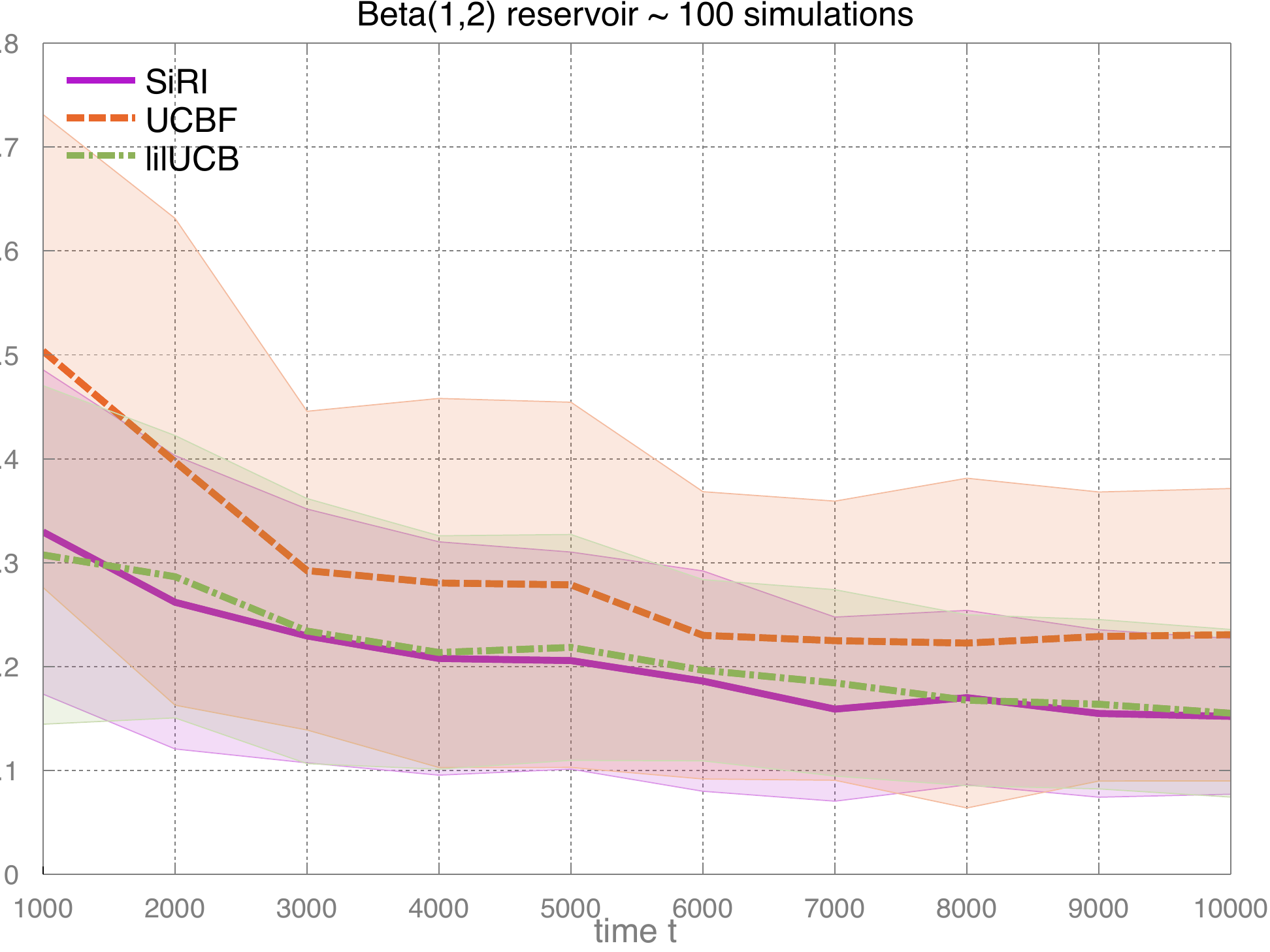}
 \vspace{-1.0em}
\caption{Uniform and ${\rm B}(1,2)$ reservoir distribution}
 \vspace{-1.em}
\label{fig:beta1}
\end{center}
\end{figure}
\begin{figure}
\begin{center}
%\hspace{0.1em}
 %\vspace{-0.2em}
\includegraphics[width=0.49\columnwidth]{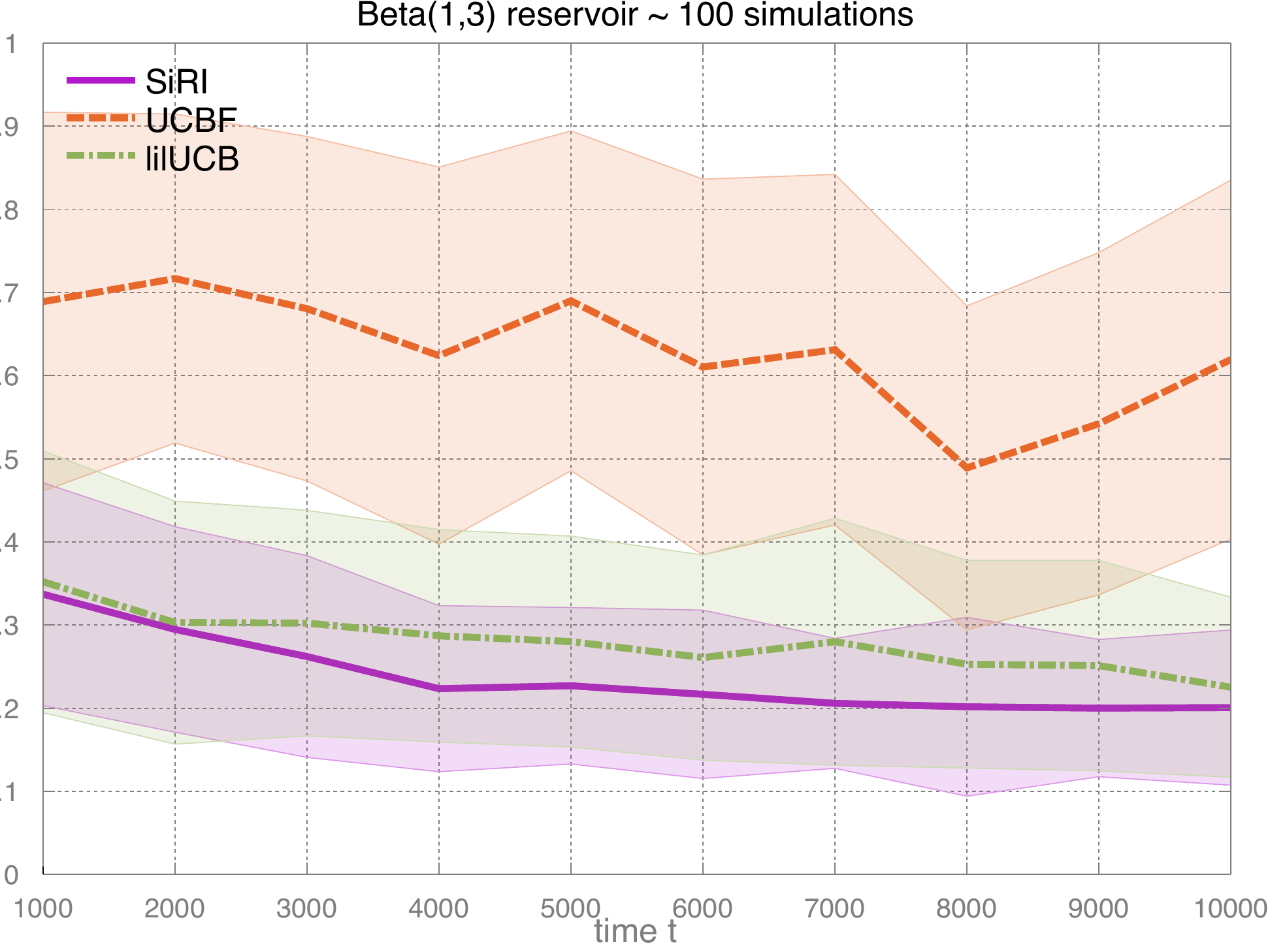}
 %\hspace{-1.2em}
\includegraphics[width=0.49\columnwidth]{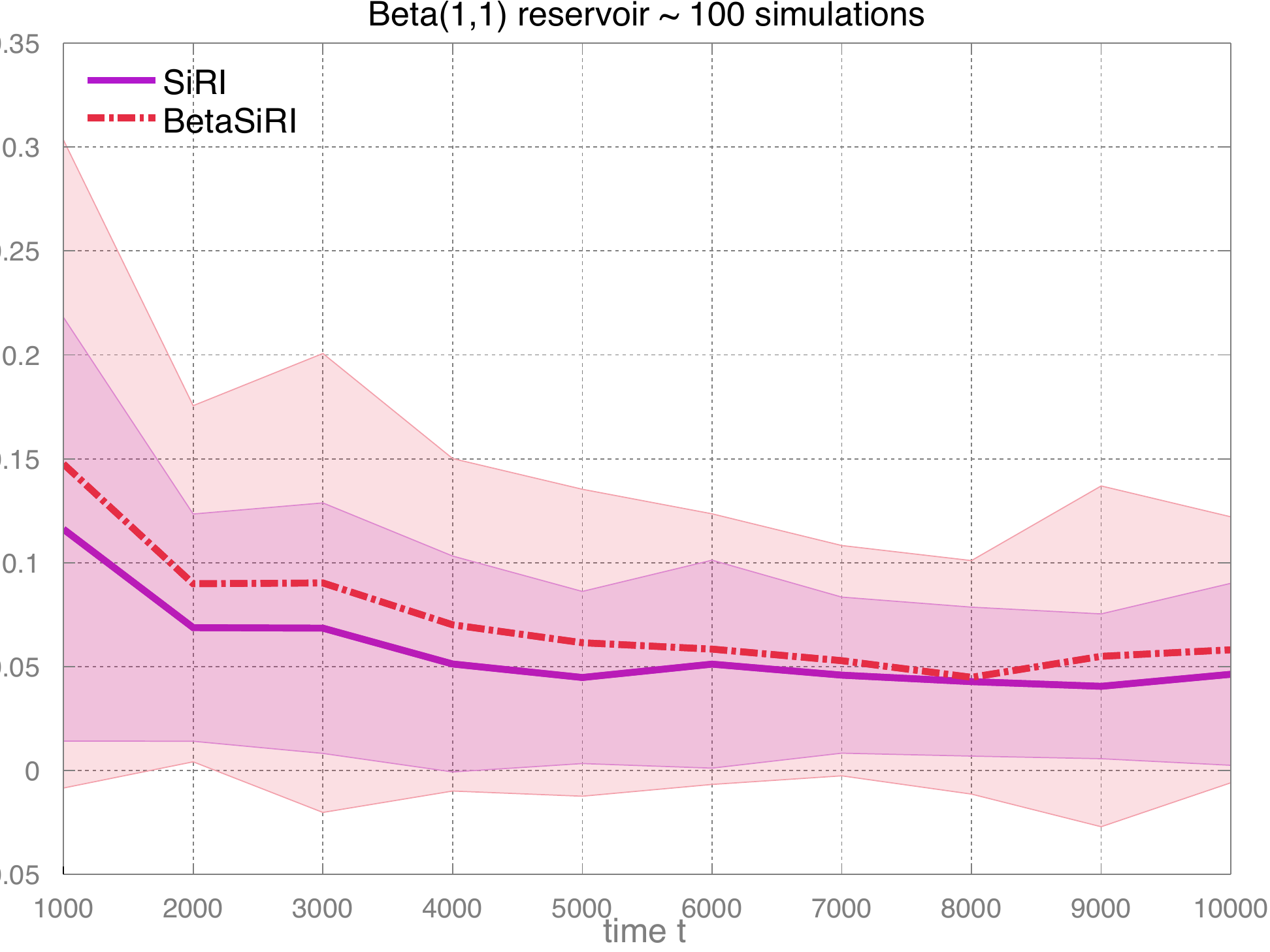}
 \vspace{-1.0em}
\caption{Comparison on ${\rm B}(1,3)$ and unknown $\beta$ on ${\rm B}(1,1)$}
\vspace{-1.5em}
\label{fig:beta3}
\end{center}
\end{figure}
% \begin{figure}
% \begin{center}
% \caption{Comparison ${\rm B}(1,2)$}
% \vspace{-1em}
% \label{fig:beta2}
% \end{center}
% \end{figure}
% \todom{doesn't it look suspicious that the simple regret for UCBF is not decreasing?}
Figure~\ref{fig:beta3} also compares SiRI with $\bar \beta$-SiRI for the uniform distribution. For this experiment, using $\sqrt{n}$ samples just for the $\beta$ estimation did not decrease the budget too much and at the same time, the estimated
$\bar\beta$ was precise enough not to hurt the final simple regret.
% \begin{figure}
% \begin{center}
% \includegraphics[width=\columnwidth]{fig/new1compbeta2}
% \caption{Comparison ${\rm B}(1,2)$}
% \vspace{-1em}
% \label{fig:beta1}
% \end{center}
% \end{figure}
 \vspace{-1.0em}
\paragraph{Conclusion}  We presented SiRI, a minimax optimal
algorithm for simple regret in infinitely many arms bandit setting, 
which is interesting when we face enormous number of potential actions.
Both the lower and upper bounds give different regimes depending on a 
complexity~$\beta$, a parameter for which we also give an efficient 
estimation procedure.
%\paragraph{Acknowledgements} 
 \vspace{-2.0em}
\paragraph*{Acknowledgments}
This work was supported by the French Ministry of Higher Education and Research and the French National Research Agency (ANR) under project ExTra-Learn n.ANR-14-CE24-0010-01.
% \begin{figure}
% \begin{center}
% \includegraphics[width=0.32\columnwidth]{fig/new5compbeta1}
% \includegraphics[width=0.32\columnwidth]{fig/new5compbeta2}
% \includegraphics[width=0.32\columnwidth]{fig/new5compbeta3}
% \caption{${\rm B}(1,1)$, ${\rm B}(1,2)$, and ${\rm B}(1,1)$}
%  \vspace{-1em}
% \label{fig:beta3}
% \end{center}
% \end{figure}
\bibliography{library}
\bibliographystyle{icml2015}
%\bibliography{library}
% \bibliography{library}
%\balance
%\bibliography{../../library}

% \todom[inline]{
% TODO: tune C and A, and try to reduce the confidence intervals artificially for SiRI
% TODO: talk about Gaussian measurements and that this does not fit the UCBF 
% }
% 
% \begin{figure}
% \begin{center}
% \includegraphics[width=0.80\columnwidth]{fig/new4compbeta1}
% \includegraphics[width=0.80\columnwidth]{fig/new4compbeta2}
% \includegraphics[width=0.80\columnwidth]{fig/new4compbeta3}
% \caption{${\rm B}(1,1)$, ${\rm B}(1,2)$, and ${\rm B}(1,1)$}
% \vspace{-1em}
% \label{fig:beta3S}
% \end{center}
% \end{figure}

% \end{document} 

\appendix

\onecolumn

\section{Additional notation}

We write $\mathbb P_1$ for the probability with respect to the arm reservoir distribution, $\mathbb P_{2}$ for the probability with respect to the distribution of the samples from the arms, and $\mathbb P_{1,2}$ for the probability both with respect to the arm reservoir distribution and the distribution of the samples from the arms.

Let $F$ be the distribution function of the \textit{mean reservoir distribution} $\mathcal L$. Let $F^{-1}$ be the \textit{pseudo-inverse} of the mean reservoir distribution. In order to express the regularity assumption, we define
\[G\left(\cdot\right) = \bar \mu^* - F^{-1}\left(1-\cdot\right).\]
%and
%$$\bar F(.) = 1 - F(\bar \mu^* - .).$$\todom[inline]{we never used $\bar F(.)$}
We assume that $G$ has a certain regularity in its right end point, which is a standard assumption for infinitely many armed bandits.
In particular, we rewrite Assumption~\ref{ass:reg} by only modifying the constants $\tilde E,\tilde E',$ and $\tilde B$.
\begin{assumption}[$\beta$ regularity in $\bar \mu^*$, version 2]\label{ass:reg2}
Let $\beta>0$. There exist $E,E',B \in(0,1) $ such that  $\forall u \in[0,B]$,
$$E' u^{1/\beta} \geq G\left(u\right)\geq E u^{1/\beta}.$$
\end{assumption}
This assumption is equivalent to Assumption~\ref{ass:reg} which is the same as the classic one \eqref{eq:refreg} by definition of $G$ and $F$ and we reformulate it  for the convenience of analysis. Without loss of generality, we assume that $\bar \mu^*>0$.

\section{Full proof of Theorem~\ref{thm:ub}}\label{ub}

\subsection{Roadmap}

The proof of Theorem~\ref{thm:ub} (upper bounds) is composed of two layers. The first layer consists of proving results on the empirical distributions of the arms emitted by the arms reservoir, the crucial object is  event $\xi_1$.  The second layer consists of proving results on the random samples of the arms, and in particular that the empirical means of the arms are not too different from the true means of the arms. For this part, the crucial object is event $\xi_2$. More precisely, these two layers can be decomposed as follows. 
\begin{itemize}
%\item A proof that among the $\bar T_{\beta}$ arms pulled by the algorithm, there is with high probability at least one arm that has a gap with respect to $\bar \mu^*$ that is smaller than the simple regret (from Theorem~\ref{thm:ub}) - this is done in Corollary~\ref{cor:barm}.
\item We prove of suitable high probability upper bounds and lower bounds on the number of arms among the $\bar T_{\beta}$ arms pulled by the algorithm that have a given gap (with respect to $\bar \mu^*$), depending on the considered gap.  This is done in Lemma~\ref{lem:xi1}. Two important results can be consequently deduced: (i) An upper bound on the number of suboptimal arms depending on how suboptimal they are. The more suboptimal they are, the more arms they are, which depends on $\beta$. (ii) A proof that among the $\bar T_{\beta}$ arms pulled by the algorithm, there is with high probability at least one arm, and not significantly more than one arm, that has a gap smaller than the simple regret from Theorem~\ref{thm:ub}. This is done in Corollary~\ref{cor:barm}.
\item In Lemma~\ref{lem:xi2}, we prove that with high probability, the empirical means of the arms are not too different from their true means. The main difficulty is that the means of the arms are random. In order  to avoid suboptimal $\log(n)$ dependency in the case $\beta<2$,  we use a peeling argument where the peeling is done over these random gaps, using the result from the previous layer, i.e.,~the bound on the number of arms with a given gap.
\end{itemize}
Afterwards, we combine the two results to bound the number of suboptimal pulls (section~\ref{ss:cool}). Since the algorithm pulls the arms depending on the empirical gaps, then (i) the bounds on the number of suboptimal and near-optimal arms, and (ii) the bounds on the deviations of the empirical means with respect to the true means, will allow to obtain the desired bound on the number of suboptimal arms. By construction of the strategy and in particular, by the choice of $\bar T_{\beta}$, we prove that with high probability, the number of pulls of the optimal arms is smaller than a fraction of $n$. This means that there is a near-optimal arm that is pulled more than $n/2$ times. This is the one selected by the algorithm which concludes the proof.

\subsection{Concentration inequalities}
We make several uses of \textit{Bernstein's inequality}:
\begin{lemma}[Bernstein's inequality]\label{lemma:bernstein}
Let $\E(X_t) = 0, |X_t|\leq b > 0$, and 
$\E(X_t^2)\leq v > 0$. 
% Then for any $\varepsilon > 0$
% \[
% P\left( \sum_{t=1}^n X_t \ge \varepsilon) \exp\left( -\frac{\varepsilon^2}{2nv + 2b\varepsilon/3} \right)\right) 
% \]
Then for any $\delta > 0$, with probability at least $1-\delta$
\[
 \sum_{t=1}^n X_t \leq \sqrt{2nv \log\delta^{-1}} + \tfrac{b}{3}\log\delta^{-1}
\]
\end{lemma}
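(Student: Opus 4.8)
The plan is to use the Cram\'er--Chernoff (exponential Markov) method together with a sub-gamma control of the moment generating function, followed by a sharp Legendre-dual inversion. Throughout I treat the $X_t$ as independent, which is the implicit assumption under which the classical Bernstein inequality holds. First I would apply the exponential Markov inequality: for any $\lambda \in (0,3/b)$, independence gives
\[
\mathbb{P}\!\left(\sum_{t=1}^n X_t \ge s\right) \le e^{-\lambda s}\prod_{t=1}^n \E\!\left[e^{\lambda X_t}\right].
\]

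Next I would bound each factor. Since $\E[X_t]=0$, I write $\E[e^{\lambda X_t}] = 1 + \E[e^{\lambda X_t}-1-\lambda X_t]$, expand $\E[e^{\lambda X_t}-1-\lambda X_t]=\sum_{k\ge 2}\lambda^k\E[X_t^k]/k!$, bound the moments by $|\E[X_t^k]|\le b^{k-2}\E[X_t^2]\le b^{k-2}v$, and use $k!\ge 2\cdot 3^{k-2}$. Summing the resulting geometric series gives, for $0<\lambda<3/b$,
\[
\E\!\left[e^{\lambda X_t}\right] \le 1 + \frac{\lambda^2 v}{2\,(1-\lambda b/3)} \le \exp\!\left(\frac{\lambda^2 v}{2\,(1-\lambda b/3)}\right),
\]
whence $\log \mathbb{P}(\sum_t X_t \ge s) \le -\lambda s + n\lambda^2 v / (2(1-\lambda b/3))$. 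This is precisely a sub-gamma bound with variance factor $\sigma^2 = nv$ and scale $c = b/3$.

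The remaining and most delicate step is the inversion over $\lambda$. I would compute the Legendre transform of $\lambda \mapsto \lambda^2\sigma^2/(2(1-c\lambda))$, which equals $(\sigma^2/c^2)\,h(cs/\sigma^2)$ with $h(u)=1+u-\sqrt{1+2u}$, yielding $\mathbb{P}(\sum_t X_t\ge s)\le \exp(-(\sigma^2/c^2)\,h(cs/\sigma^2))$. Setting $s=\sqrt{2\sigma^2 t}+ct$ and writing $\theta=c^2 t/\sigma^2$, the identity $1+2(\theta+\sqrt{2\theta})=(1+\sqrt{2\theta})^2$ gives $h(\theta+\sqrt{2\theta})=\theta$, so the exponent is exactly $t$. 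Taking $t=\log\delta^{-1}$ and substituting $\sigma^2=nv$, $c=b/3$ produces the claimed threshold $\sqrt{2nv\log\delta^{-1}}+\tfrac{b}{3}\log\delta^{-1}$.

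I expect the main obstacle to be precisely this inversion constant. The na\"ive route, which first optimizes $\lambda$ to obtain the tail $\exp(-s^2/(2(\sigma^2+cs)))$ and then solves the resulting quadratic in $s$ via the crude bound $\sqrt{x^2+y^2}\le x+y$, only yields a linear term $\tfrac{2b}{3}\log\delta^{-1}$. Recovering the sharper $\tfrac{b}{3}$ stated in the lemma requires the exact Legendre-dual computation and the algebraic identity $h(\theta+\sqrt{2\theta})=\theta$ above.
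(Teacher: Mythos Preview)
Your proof is correct and follows the standard Cram\'er--Chernoff route to Bernstein's inequality, including the exact Legendre-dual inversion via $h(\theta+\sqrt{2\theta})=\theta$ that recovers the sharp constant $b/3$ in the linear term. The paper, however, does not prove this lemma at all: it simply states Bernstein's inequality as a classical concentration tool (alongside the empirical Bernstein inequality) and uses it later in the analysis. So there is no ``paper's own proof'' to compare against; your argument supplies a complete derivation where the paper invokes the result as known.
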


Furthermore, Algorithm~\ref{alg:sirimber} along with 
Corollary~\ref{cor:bern} are based on the \textit{empirical Bernstein concentration inequality}.

\begin{lemma}[Empirical Bernstein's inequality]\label{lemma:ebernstein}
 Let $\E(X_t) = 0, |X_t|\leq b > 0$. Let for any $j = 1, \dots, n$
\[
 V_j = \frac{1}{j}\sum_{t=1}^j\left( X_t - \frac{1}{j}\sum_{i=1}^j X_i \right)^2
\]

Then for any $\delta > 0$, with probability at least $1-\delta$
\[
 \sum_{t=1}^j X_t \leq \sqrt{2n V_j \log\left(3\delta^{-1}\right)} + 3b\log\left(3\delta^{-1}\right)
\]
\end{lemma}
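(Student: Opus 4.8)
The plan is to reduce this empirical statement to the population Bernstein inequality of Lemma~\ref{lemma:bernstein} by trading the unknown variance $v = \E(X_t^2)$ (which equals $\mathrm{Var}(X_t)$ because $\E(X_t)=0$) for the observable $V_j$ at the cost of a lower-order term. First I would apply Lemma~\ref{lemma:bernstein} at confidence level $\delta/3$ --- the factor $1/3$ producing the $\log(3\delta^{-1})$ of the statement and leaving room for a union bound --- to obtain, with probability at least $1-\delta/3$,
\[
\sum_{t=1}^j X_t \le \sqrt{2 j v \log(3\delta^{-1})} + \tfrac{b}{3}\log(3\delta^{-1}).
\]

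The crux is then a high-probability comparison of the form $\sqrt{v} \le \sqrt{V_j} + c\,b\sqrt{\log(3\delta^{-1})/j}$, valid with probability at least $1-\delta/3$ for an absolute constant $c$; this is the Maurer--Pontil step. The right way to prove it is to note that, writing $P = I - \tfrac1j \mathbf{1}\mathbf{1}^\top$ for the centering projection, $\sqrt{V_j} = \tfrac1{\sqrt j}\|P x\|_2$ is a \emph{convex}, $\tfrac1{\sqrt j}$-Lipschitz function of the sample $x = (X_1,\dots,X_j)$ whose coordinates range over intervals of length $2b$. Talagrand's concentration inequality for convex Lipschitz functions of independent bounded variables then gives sub-Gaussian lower-tail concentration of $\sqrt{V_j}$ about its mean at the scale $b/\sqrt j$; crucially, convexity is what yields the $1/\sqrt j$ rate here, since a plain bounded-differences (McDiarmid) argument would only give a useless $\cO(b)$ deviation. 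To convert $\E\sqrt{V_j}$ into $\sqrt v$ I would combine Jensen's inequality, $\E\sqrt{V_j}\le\sqrt{\E V_j}=\sqrt{\tfrac{j-1}{j}v}\le\sqrt v$, with the variance bound $\mathrm{Var}(\sqrt{V_j})=\cO(b^2/j)$ coming from the same convex concentration: since $\mathrm{Var}(\sqrt{V_j}) = \E V_j - (\E\sqrt{V_j})^2$, this forces $(\E\sqrt{V_j})^2 \ge \tfrac{j-1}{j}v - \cO(b^2/j)$ and hence $\E\sqrt{V_j}\ge\sqrt v - \cO(b/\sqrt j)$. Controlling this Jensen gap when $v$ is small is the main obstacle, and it is exactly where the convexity of $\sqrt{V_j}$ (rather than a naive sensitivity bound) has to be exploited.

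Finally I would union-bound the two events, so that both inequalities hold simultaneously with probability at least $1-\tfrac{2}{3}\delta\ge 1-\delta$, and substitute the variance comparison into the Bernstein bound. Using $\sqrt{a+a'}\le\sqrt a+\sqrt{a'}$,
\[
\sqrt{2 j v \log(3\delta^{-1})}\le\sqrt{2 j V_j \log(3\delta^{-1})}+c\sqrt2\,b\log(3\delta^{-1}),
\]
and then bounding $\sqrt{2 j V_j \log(3\delta^{-1})}\le\sqrt{2 n V_j \log(3\delta^{-1})}$, which is legitimate since $j\le n$ and which is also why the claimed bound carries $n$ rather than $j$ under the root. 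The two remaining additive terms, namely $\tfrac{b}{3}\log(3\delta^{-1})$ from Lemma~\ref{lemma:bernstein} and $c\sqrt2\,b\log(3\delta^{-1})$ from the substitution, are both of order $b\log(3\delta^{-1})$; after fixing the absolute constant $c$ they are absorbed into the stated $3b\log(3\delta^{-1})$, completing the argument.
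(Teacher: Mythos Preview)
The paper does not prove this lemma at all: it is stated in the appendix alongside Lemma~\ref{lemma:bernstein} as a standard concentration tool, without proof and without attribution, and is only invoked (via the confidence term of Algorithm~\ref{alg:sirimber}) in the sketch of Corollary~\ref{cor:bern}. So there is no ``paper's own proof'' to compare against.

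Your sketch is a faithful reconstruction of the standard Maurer--Pontil / Audibert--Munos--Szepesv\'ari argument: population Bernstein at level $\delta/3$, a high-probability bound $\sqrt{v}\le\sqrt{V_j}+\cO(b\sqrt{\log(3/\delta)/j})$, and a union bound. The key technical point --- that $\sqrt{V_j}=\tfrac{1}{\sqrt j}\|Px\|_2$ is convex and $1/\sqrt j$-Lipschitz, so that Talagrand's convex-distance inequality (rather than McDiarmid) gives the needed $b/\sqrt j$ fluctuation scale --- is correctly identified, as is the Jensen-gap step $\E V_j-(\E\sqrt{V_j})^2=\mathrm{Var}(\sqrt{V_j})=\cO(b^2/j)$ to pass from $\E\sqrt{V_j}$ back to $\sqrt v$. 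One minor remark: Talagrand's inequality in this form concentrates around the median rather than the mean, but since the median--mean gap is itself $\cO(b/\sqrt j)$ by the same sub-Gaussian tail, this is harmless for the final constant. The replacement of $j$ by $n$ under the square root and the absorption of the two $\cO(b\log(3/\delta))$ terms into $3b\log(3/\delta)$ are exactly as you describe.
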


\subsection{Notation}

For any $i \leq K_n$, set
$$\Delta_i = \bar \mu^* - \mu_i,$$
where we remind the reader that $\mu_i$ is the mean of distribution of arm $i$.

%In our analysis, we make several uses of a peeling argument.
%Specifically, we divide the domain of the arms' mean into segment depending on their distance from the optimal mean.  
%The smaller size intervals will be given more weight when we apply a weighted union bound.

Without loss of generality, we assume that $\bar \mu^*>0$. For any $u \in \mathbb N$, we define
$$I_u = \left[\bar \mu^* -G\left(2^{-u}\right), \bar \mu^* -G\left(2^{-u-1}\right)\right].$$
We also define
$$I_{-1} = \left[0, \bar \mu^* - G\left(B\right)\right].$$
We further define
$$I^* = \left[\bar \mu^* -G\left(2^{-\bar t_{\beta}}\right), \bar \mu^* -G\left(0\right)\right] = \left[\bar \mu^* -G\left(2^{-\bar t_{\beta}}\right), \bar \mu^*\right].$$
Let $N_u$ be the number of arms in segment $I_u$, 
$$N_u = \sum_{k=1}^{\bar T_{\beta}} \textbf{1}\{\mu_k \in I_u\},$$
and let $\bar N^*$ be the number of arms in the segment $I^*$, 
$$\bar N^* = \sum_{k=1}^{\bar T_{\beta}} \textbf{1}\{\mu_k \in I^*\}.$$

\subsection{Favorable high-probability event}

Let $\xi_1$ be the event defined as
\begin{align*}
\xi_1 &=\Bigg\{\omega: \forall u\in \mathbb N, u \leq \bar t_{\beta}, \left|N_u - 2^{\bar t_{\beta}-u-1}\right| \leq \sqrt{(\bar t_{\beta}-u+1) 2^{\bar t_{\beta}-u}\log(1/\delta)} + (\bar t_{\beta}-u+1) \log(1/\delta), \\
&\qquad\qquad\qquad \mathrm{\ and\ }  \bar N^* \leq 1 + 2\sqrt{\log(1/\delta)} + 2 \log(1/\delta)\Bigg\}\\
&=\Bigg\{\omega: \forall u\in \mathbb N, u \leq \bar t_{\beta}, \left|N_u - 2^{\bar t_{\beta}-u-1}\right| \leq 2^{\bar t_{\beta}-u-1}\varepsilon_u  \mathrm{\ and\ } \bar N^* \leq 1 + \varepsilon_{\bar t_{\beta}}\Bigg\}.
\end{align*}
where $ \varepsilon_u = 2\sqrt{(\bar t_{\beta}-u+1) 2^{-(\bar t_{\beta}-u)}\log(1/\delta)} + 2(\bar t_{\beta}-u+1) 2^{-(\bar t_{\beta}-u)} \log(1/\delta)$.

\begin{lemma}\label{lem:xi1}
The probability of $\xi_1$ under both the distribution of the arm reservoir and the samples of the arms is larger than $1-\left(1+\frac{e}{e-1}\right)\delta$ for $\delta$ small enough, 
$$\mathbb P_1(\xi_1) = \mathbb P_{1,2}\left(\xi_1\right) \geq 1-\left(1+\frac{e}{e-1}\right)\delta.$$ 
\end{lemma}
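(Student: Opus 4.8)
The plan is to observe that the event $\xi_1$ actually depends only on the arm-reservoir draw, and that, after this observation, Lemma~\ref{lem:xi1} reduces to a family of binomial concentration bounds handled by Bernstein's inequality (Lemma~\ref{lemma:bernstein}). First I would note that $N_u$ and $\bar N^*$ are deterministic functions of the means $\mu_1,\dots,\mu_{\bar T_\beta}$, which are i.i.d.\ draws from $\mathcal L$ and are independent of the within-arm sampling noise; hence $\xi_1$ is measurable with respect to the arm reservoir alone, which already gives the first claimed identity $\mathbb P_1(\xi_1)=\mathbb P_{1,2}(\xi_1)$. Next, by the very definition of the intervals $I_u$ and $I^*$ through the quantile function $F^{-1}$ (recall $\bar\mu^*-G(x)=F^{-1}(1-x)$, so that $\mathbb P_{\mu\sim\mathcal L}(\mu>\bar\mu^*-G(x))=x$ up to the harmless generalized-inverse boundary issue) one has $\mathbb P_{\mu\sim\mathcal L}(\mu\in I_u)=2^{-u}-2^{-u-1}=2^{-u-1}$ and $\mathbb P_{\mu\sim\mathcal L}(\mu\in I^*)=2^{-\bar t_\beta}$. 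Thus $N_u\sim\mathrm{Bin}(\bar T_\beta,2^{-u-1})$ has mean of order $2^{\bar t_\beta-u-1}$ (equal to it if one assumes w.l.o.g., adjusting the constant $A$, that $\bar T_\beta=2^{\bar t_\beta}$) and variance at most its mean, while $\bar N^*\sim\mathrm{Bin}(\bar T_\beta,2^{-\bar t_\beta})$ has mean in $[1,2)$ and variance at most $2$; note that Assumption~\ref{ass:reg}/\ref{ass:reg2} is not really needed for this lemma beyond ensuring that $\bar\mu^*$, $F^{-1}$ and $G$ are well defined.

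The core step is then a level-by-level application of Lemma~\ref{lemma:bernstein} with a geometrically shrinking confidence schedule. For each $u\le\bar t_\beta$ I would apply it (with $b=1$) to $N_u-\mathbb E[N_u]$ and to $\mathbb E[N_u]-N_u$ with confidence parameter $\delta^{\,\bar t_\beta-u+1}$; substituting the variance bound $\mathrm{Var}(N_u)\le 2^{\bar t_\beta-u-1}$ and $\log(1/\delta^{\bar t_\beta-u+1})=(\bar t_\beta-u+1)\log(1/\delta)$, the resulting deviation is at most $\sqrt{(\bar t_\beta-u+1)\,2^{\bar t_\beta-u}\log(1/\delta)}+(\bar t_\beta-u+1)\log(1/\delta)=2^{\bar t_\beta-u-1}\varepsilon_u$, which is exactly the width appearing in the definition of $\xi_1$ — this is precisely why the factors $(\bar t_\beta-u+1)$ and $2^{-(\bar t_\beta-u)}$, as well as the leading $2$'s, occur in $\varepsilon_u$. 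For $\bar N^*$ I would apply Lemma~\ref{lemma:bernstein} once with confidence $\delta$, which yields $\bar N^*\le 1+2\sqrt{\log(1/\delta)}+2\log(1/\delta)=1+\varepsilon_{\bar t_\beta}$.

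Finally I would union-bound over the $\bar t_\beta+1$ levels together with the $\bar N^*$ event: the total failure probability is bounded by a constant times $\sum_{u=0}^{\bar t_\beta}\delta^{\,\bar t_\beta-u+1}+\delta=\sum_{k=1}^{\bar t_\beta+1}\delta^{k}+\delta\le\frac{\delta}{1-\delta}+\delta$, and invoking $\frac{1}{1-\delta}\le\frac{e}{e-1}$ for $\delta\le 1/e$, together with the slightly sharper variance estimate $\mathrm{Var}(N_u)\le 2^{\bar t_\beta-u-1}(1-2^{-u-1})$ to accommodate both tails, this yields $\mathbb P_{1,2}(\xi_1)\ge 1-\bigl(1+\tfrac{e}{e-1}\bigr)\delta$ for $\delta$ small enough, as claimed.

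The hard part is not any single estimate but the accounting in the last step: a \emph{uniform} union bound over the $\Theta(\log n)$ levels would replace $\delta$ by $\delta/(\bar t_\beta+1)\approx\delta/\log\log n$, which reintroduces a $\log\log n$ factor inside every confidence width and, propagated through the main proof, costs a $\log n$ factor in the regret when $\beta<2$. The geometrically decaying schedule $\delta^{\,\bar t_\beta-u+1}$ is exactly what makes the union bound telescope to an absolute constant times $\delta$, and the delicate point is to verify that the Bernstein widths it produces still fit termwise inside the prescribed $\varepsilon_u$ (equivalently, that the $\tfrac13\log(1/\delta_u)$ and $\sqrt{2v\log(1/\delta_u)}$ terms are dominated by $(\bar t_\beta-u+1)\log(1/\delta)$ and $\sqrt{(\bar t_\beta-u+1)2^{\bar t_\beta-u}\log(1/\delta)}$ respectively). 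Everything else is a routine computation.
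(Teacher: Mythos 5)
Your proof is correct and follows essentially the same route as the paper's: recognize each $N_u$ as a sum of i.i.d.\ Bernoulli$(2^{-u-1})$ indicators depending only on the reservoir draw, apply Bernstein's inequality level by level with a geometrically decaying confidence schedule so that the union bound telescopes to a constant times $\delta$, and handle $\bar N^*$ separately. The only (cosmetic) difference is your choice $\delta_u=\delta^{\bar t_\beta-u+1}$ in place of the paper's $\delta_u=e^{-(\bar t_\beta-u+1)}\delta$; both yield the width $\varepsilon_u$ and the claimed constant for $\delta$ small enough.
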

\begin{proof}[Proof of Lemma~\ref{lem:xi1}]
Let $u\in \mathbb N$. We have by definition that
$$N_u = \sum_{k=1}^{\bar T_{\beta}} \textbf{1}\{\mu_k \in I_u\},$$
is a sum of independent Bernoulli random variables of parameter $2^{-u} - 2^{-u-1} = 2^{-u-1}$. 
By a Bernstein concentration inequality~(Lemma~\ref{lemma:bernstein}) for sums of Bernoulli random variables, this implies that with probability $1-\delta_u>0$,
\begin{align*}
\left|N_u - 2^{\bar t_{\beta}-u-1}\right| \leq \sqrt{ 2^{\bar t_{\beta}-u}\log\delta_u^{-1}} + \log\delta_u^{-1}.
\end{align*}
Set $\delta_u = \exp\left(-\left(\bar t_{\beta} - u+1\right)\right)\delta$. Notice that for $u\leq \bar t_{\beta}$, $\log \delta_u^{-1} \leq (\bar t_{\beta}-u+1) \log \delta^{-1}$. Then the result holds by a union bound since
for $\delta$ small enough
\[\sum_{u=0}^{\bar t_{\beta}} \delta_u = \delta \sum_{u=0}^{\bar t_{\beta}} \exp\left(-\left(\bar t_{\beta} - u+1\right)\right) \leq \frac{e\delta}{e-1},\]
and by similar argument for $\bar N^* $ which 
 together with another union bound give the claim.
\end{proof}

The following corollary follows from Lemma~\ref{lem:xi1}.
\begin{corollary}\label{cor:barm}
Set $\bar t^* = \lfloor \bar t_{\beta} - 96 \log_2(\log_2(\log(1/\delta))) - \log_2(\log(1/\delta))\rfloor -2$. 
Let $\delta$ be  smaller than an universal constant. 
If $n$ is large enough so that $\bar t^*\geq \log_2(1/B)$, then on $\xi_1$, there is at least an arm of index in $\{1, \ldots, \bar T_{\beta}\}$ such that it belongs to $I_{\bar t^*}$. If $k^*$ is its index, then
$$\Delta_{k^*} \leq  \tfrac{1}{4} E'(\log_2(\log(1/\delta)))^{96} 2^{-\bar t_{\beta}/\beta} \log(1/\delta).$$
\end{corollary}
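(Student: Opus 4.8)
The plan is to use the lower bound on $N_{\bar t^*}$ that is guaranteed by the event $\xi_1$, and then translate ``$N_{\bar t^*}\geq 1$'' into the claimed bound on the gap $\Delta_{k^*}$ using the regularity Assumption~\ref{ass:reg2} on $G$. First I would recall from $\xi_1$ that, for any $u\leq\bar t_\beta$,
\[
N_u \;\geq\; 2^{\bar t_\beta-u-1}\bigl(1-\varepsilon_u\bigr)
\;=\; 2^{\bar t_\beta-u-1} - \sqrt{(\bar t_\beta-u+1)\,2^{\bar t_\beta-u}\log(1/\delta)} - (\bar t_\beta-u+1)\log(1/\delta).
\]
Plugging in $u=\bar t^*$, and writing $d := \bar t_\beta-\bar t^* = \lfloor 96\log_2(\log_2\log(1/\delta)) + \log_2\log(1/\delta)\rfloor + 2$, the dominant term is $2^{d-1}$, which is of order $(\log_2\log(1/\delta))^{96}\log(1/\delta)$. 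The correction terms are $\sqrt{(d+1)2^{d}\log(1/\delta)}$ and $(d+1)\log(1/\delta)$; since $d$ is only $\polyloglog(1/\delta)$-plus-$\log_2\log(1/\delta)$, while $2^d$ already carries a factor $\log(1/\delta)$, one checks that $2^{d-1}$ dominates the sum of the two correction terms once $\delta$ is below an absolute constant, so $N_{\bar t^*}\geq 1$. This is exactly the routine but slightly fiddly inequality one has to verify; I would do it by bounding $\sqrt{(d+1)2^d\log(1/\delta)} \leq 2^{d-2}$ and $(d+1)\log(1/\delta)\leq 2^{d-2}$ separately for $\delta$ small, using that $2^d \geq (\log_2\log(1/\delta))^{96}\log(1/\delta)$ grows polynomially faster than $d\log(1/\delta)$.

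Having shown $N_{\bar t^*}\geq 1$, there is at least one arm among the first $\bar T_\beta$ whose mean lies in $I_{\bar t^*} = [\bar\mu^* - G(2^{-\bar t^*}),\, \bar\mu^* - G(2^{-\bar t^*-1})]$; call its index $k^*$. Then $\Delta_{k^*} = \bar\mu^* - \mu_{k^*} \leq G(2^{-\bar t^*})$. The remaining step is to upper-bound $G(2^{-\bar t^*})$: provided $2^{-\bar t^*}\leq B$ — which is precisely the hypothesis $\bar t^*\geq \log_2(1/B)$, guaranteed once $n$ is large enough — Assumption~\ref{ass:reg2} gives $G(2^{-\bar t^*})\leq E'\,2^{-\bar t^*/\beta}$. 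Finally I would substitute $\bar t^* = \bar t_\beta - d$ (up to the floor, which only helps) to get $2^{-\bar t^*/\beta} = 2^{d/\beta}\,2^{-\bar t_\beta/\beta}$, and bound $2^{d/\beta}\leq 2^{d} \leq \tfrac14(\log_2\log(1/\delta))^{96}\log(1/\delta)$ (using $\beta\geq 1$ is not needed; for $\beta<1$ one absorbs the extra power into the constants, but in fact the cleanest route is to note $d/\beta \le d/b$ and track the constant $A$ accordingly — here simply using $2^{d/\beta}\le 2^{d}$ and the definition of $d$ suffices for the stated form with the factor $\tfrac14$). This yields
\[
\Delta_{k^*} \;\leq\; E'\,2^{d}\,2^{-\bar t_\beta/\beta} \;\leq\; \tfrac14 E'\,(\log_2(\log(1/\delta)))^{96}\,2^{-\bar t_\beta/\beta}\,\log(1/\delta),
\]
which is the claim. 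The main obstacle is purely the bookkeeping in the first paragraph: making the constants in the two correction terms of $N_{\bar t^*}$ small enough relative to $2^{d-1}$, and carrying the $\polyloglog$ factor $d$ correctly through the exponent conversion $2^{-\bar t^*/\beta}$ so that it collapses into the advertised $(\log_2\log(1/\delta))^{96}$ with the clean $\tfrac14$ prefactor — there is nothing deep, but it is where an off-by-a-power-of-$\beta$ slip would hide.
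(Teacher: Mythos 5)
Your proof follows essentially the same route as the paper's: show that on $\xi_1$ the deviation term $\varepsilon_{\bar t^*}$ is small enough (the paper does exactly the bookkeeping you describe, verifying $\varepsilon_u\le 1/2$ for all $u\le\bar t^*$ and $\delta$ below a universal constant, whence $N_{\bar t^*}\ge 2^{\bar t_\beta-\bar t^*-1}(1-\varepsilon_{\bar t^*})\ge 1$), and then apply Assumption~\ref{ass:reg2} to get $\Delta_{k^*}\le G(2^{-\bar t^*})\le E'2^{-\bar t^*/\beta}$ before rewriting the exponent in terms of $2^{-\bar t_\beta/\beta}$. The only caveat --- which the paper shares and likewise glosses over --- is the final conversion $2^{(\bar t_\beta-\bar t^*)/\beta}\le\tfrac14(\log_2(\log(1/\delta)))^{96}\log(1/\delta)$, which as you yourself flag is only clean for $\beta\ge 1$ and even then costs a small absolute constant rather than the advertised $\tfrac14$; this does not affect the way the corollary is used downstream.
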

\begin{proof}[Proof of Corollary~\ref{cor:barm}]
First we have for $u \leq \bar t^*$ %\todom{correct direction?}
%\todom[inline]{the following derivation would be good to recheck}
\begin{align*}
\varepsilon_u &= 2\sqrt{(\bar t_{\beta}-u+1) 2^{-(\bar t_{\beta}-u)} \log(1/\delta)} + 2 (\bar t_{\beta}-u+1) 2^{-(\bar t_{\beta}-u)} \log(1/\delta)\\
&\leq 2 \sqrt{\frac{(1 + \log_2(\log(1/\delta)) +  96 \log_2(\log_2(\log(1/\delta))) ) }{96\log_2(\log(1/\delta))} } 
+ \frac{2(1 + \log_2(\log(1/\delta)) +  96 \log_2(\log_2(\log(1/\delta))) ) }{96\log_2(\log(1/\delta))} \\
&\leq  4\sqrt{1/(96\log_2(\log(1/\delta))) +1/96 + \log_2(\log_2(\log(1/\delta)))/\log_2(\log(1/\delta))}\\
&\leq 1/2, \end{align*}
%\todom{still only $<1$}
for $\delta$ being a small enough constant. %\todom{$4 + \log_2$ instead of $1 + \log_2$? }

This implies that for $u \leq \bar t^*$
\begin{align*}
2^{\bar t_{\beta}-u-1}(1-\varepsilon_u) \geq 2^{2 - 1} \times 1/2 \geq 1.
\end{align*}%\todom{still holds?}
This implies that on $\xi_1$, $N_{\bar t^*} \geq 1$, which means there is at least one arm in $I_{\bar t^*}$. Let us call $k$ one of these arms. 
By definition of $I_{\bar t^*}$,  it satisfies 
$$\Delta_{k^*} \leq G(2^{-\bar t^*}) \leq E'2^{-\bar t^*/\beta}\leq \tfrac{1}{4}E'(\log_2(\log(1/\delta)))^{96} 2^{-\bar t_{\beta}/\beta} \log(1/\delta).$$
because of Assumption~\ref{ass:reg2}, since $t^* \geq \log_2(1/B)$.
\end{proof}

Let for any $k\in \mathbb N, 1 \leq k \leq \bar T_{\beta}$
$$n_k = \left\lfloor \log_2\left(\frac{D\log\left(\max(1,2^{2\bar t_{\beta}/b}\Delta_k^2)/b\right)}{\max\left(2^{-2\bar t_{\beta}/b},\Delta_k^2\right)} \right) \right\rfloor,$$
where $D$ is a large constant, and
$$\tilde n_u =  \log_2\left(\frac{D\log\left(\max\left(1,2^{2\bar t_{\beta}/b}G\left(2^{-\left(u+1\right)}\right)^2\right)/\delta\right)}{\max\left(2^{-2\bar t_{\beta}/b},G\left(2^{-(u+1)}\right)^2\right)} \right).$$
Let also
$$\tilde n_{-1} = \log_2\left(\frac{D\log\left(\max(1,2^{2\bar t_{\beta}/b}G\left(B\right)^2)/\delta\right)}{\max\left(2^{-2\bar t_{\beta}/b},G\left(B\right)^2\right)} \right).$$

\begin{align*}
\mathrm{Let\ }\xi_2 =\left\{\omega: \forall k\in \mathbb N^*, k \leq \bar T_{\beta}, \forall v \leq n_k  \left|\frac{1}{2^v} \sum_{i=1}^{2^v}X_{k,i} - \mu_k \right| \leq 2\sqrt{C2^{-v}\log(2^{2\bar t_{\beta}/b-v}/\delta)} + 2C2^{-v}\log(2^{2\bar t_{\beta}/b-v}/\delta) \right\}.
\end{align*}

\begin{lemma}\label{lem:xi2}
\textbf{Case $\beta< 2${\rm:}} Knowing $\xi_1$, the probability of $\xi_2$ is larger than $1 - H\log(1/\delta)^2 \delta$, 
$$\mathbb P_2\left(\xi_2 | \xi_1\right)  \geq 1 - H\log\left(1/\delta\right)^2 \delta,$$
where $H$ is a constant that depends only on $D,E,E',\beta$.

\textbf{Case $\beta \geq 2${\rm:}} Knowing $\xi_1$, the probability of $\xi_2$ is larger than $1 - H\log(1/\delta)^2 \log(n) \delta$, 
$$\mathbb P_2\left(\xi_2 | \xi_1\right)  \geq 1 - H\log(1/\delta)^2  \log(n)^2 \delta,$$
where $H$ is a constant that depends only on $D,E,E',\beta$.

%\textbf{Case $\beta= 2$:} Knowing $\xi_1$, the probability of $\xi_2$ is larger than $1 - H\log(1/\delta)^2 \log(n)^3 \delta$, i.e.
%$$\mathbb P_2(\xi_2 | \xi_1)  \geq 1 - H\log(1/\delta)^2 \log(n)^3 \delta,$$
%where $H$ is a constant that depends only on $D,E,\beta$.
\end{lemma}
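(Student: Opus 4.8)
The plan is to show that $\xi_2$ fails only if, for some arm $k$ and some scale $v \le n_k$, a Bernstein deviation bound is violated, and then union-bound over $k$ and $v$ carefully. First I would fix an arm $k$ and a dyadic scale $2^v$ with $v \le n_k$. Since the samples $X_{k,i}$ lie in $[-C,C]$ by Assumption~\ref{ass:sample}, each has variance at most $C$ (after centering, $|X_{k,i}-\mu_k|\le 2C$ and $\E[(X_{k,i}-\mu_k)^2]\le C$ up to the normalization already baked into the constants), so Bernstein's inequality (Lemma~\ref{lemma:bernstein}) gives that with probability at least $1-\delta_{k,v}$,
\[
\Big|\tfrac{1}{2^v}\sum_{i=1}^{2^v}X_{k,i}-\mu_k\Big|
\le \sqrt{2C\,2^{-v}\log(1/\delta_{k,v})}+\tfrac{2C}{3}\,2^{-v}\log(1/\delta_{k,v}).
\]
Choosing $\delta_{k,v}=2^{-(2\bar t_\beta/b - v)}\delta$ matches the confidence width in the definition of $\xi_2$ (the factor $2$ in front of each term of $\xi_2$ absorbs the $\sqrt{2C}$ versus $2\sqrt{C}$ discrepancy and the $1/3$). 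So the event $\xi_2$ holds as soon as all these per-$(k,v)$ bounds hold, and $\mathbb P_2(\xi_2^c\mid \xi_1)\le \sum_{k\le\bar T_\beta}\sum_{v\le n_k}\delta_{k,v}$.

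The next step is to bound this double sum, and this is where conditioning on $\xi_1$ does the work. For each $v$, the inner contribution is $\delta\,2^{v}\,2^{-2\bar t_\beta/b}$ times the number of arms $k$ with $n_k\ge v$; and an arm has $n_k\ge v$ only if its gap $\Delta_k$ is small enough, roughly $\Delta_k^2\lesssim 2^{-v}\polylog$, i.e.\ $\Delta_k\lesssim G(2^{-u})$ for $u$ with $2^{-2u/\beta}\approx 2^{-v}$, using Assumption~\ref{ass:reg2}. On $\xi_1$ the number of such arms is at most $O(2^{\bar t_\beta - u})$ (plus lower-order terms), with $u\approx v b/2$. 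Plugging in, the $v$-th term is of order $\delta\, 2^{v}\,2^{-2\bar t_\beta/b}\,2^{\bar t_\beta - vb/2}$. When $\beta<2$, so $b=\beta<2$ and $2/b>1$, one checks $2^{-2\bar t_\beta/b}2^{\bar t_\beta}\le 1$ for $\bar t_\beta$ large (here $\bar T_\beta\approx n^{\beta/2}$, and $2\bar t_\beta/b \approx \bar t_\beta \cdot 2/\beta > \bar t_\beta$), so the terms are summable and each is $\le\delta$; the number of scales $v$ is $O(\log(2^{\bar t_\beta}))=O(\log n)$ but the extra $\log(1/\delta)^2$ comes from the peeling slack in $\xi_1$'s deviation terms $\varepsilon_u$, yielding $\mathbb P_2(\xi_2^c\mid\xi_1)\le H\log(1/\delta)^2\delta$. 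When $\beta\ge 2$, $b=2$, $2/b=1$, and $2^{-2\bar t_\beta/b}2^{\bar t_\beta}=1$, so one does not gain the geometric decay across scales; instead one picks up an extra factor $O(\log n)$ (the number of relevant dyadic scales, which is $\Theta(\bar t_\beta)=\Theta(\log n)$ since $\bar T_\beta\approx n$), giving the stated $H\log(1/\delta)^2\log(n)^2\delta$ (one $\log n$ from the scales, and the squared form matching the statement absorbs the $\tilde n_{-1}$ boundary term and constants).

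The main obstacle I expect is the bookkeeping in the case $\beta\ge 2$: one must be careful that the ``number of arms with $n_k\ge v$'' bound from $\xi_1$ is applied at the correct index $u=u(v)$, that the boundary segments $I_{-1}$ and $I^*$ (with their own thresholds $\tilde n_{-1}$ and the definition of $\bar t^*$) are handled separately and contribute only lower-order terms, and that the $\delta_{k,v}$ truncation at $v\le n_k$ is exactly what kills the sum over $v>n_k$ (where the arm is already provably suboptimal and need not be tracked). The $\beta<2$ case is where the absence of an extra $\log n$ is delicate: it hinges on the strict inequality $2/b>1$ giving genuine geometric summability of $\sum_v 2^{v(1-2/b)\bar t_\beta/\bar t_\beta}$-type terms, together with the peeling choice $\delta_u\propto e^{-(\bar t_\beta-u)}\delta$ already used in Lemma~\ref{lem:xi1}, so that the union over scales costs only constants and the residual $\log(1/\delta)^2$. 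I would organize the proof by first writing $\mathbb P_2(\xi_2^c\mid\xi_1)\le\sum_{k,v}\delta_{k,v}$, then reindexing the sum by $(u,v)$ using $\xi_1$, then splitting into $\beta<2$ and $\beta\ge 2$ and doing the two geometric-series estimates, keeping the constant $D$ large enough throughout so that $n_k$ and $\tilde n_u$ align up to the constant $E,E'$ from Assumption~\ref{ass:reg2}.
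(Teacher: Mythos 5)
Your proposal follows essentially the same route as the paper's proof: a Bernstein bound per arm and per dyadic scale $v\le n_k$, the choice $\delta_{k,v}=2^{v}2^{-2\bar t_\beta/b}\delta$, a union bound reindexed by the gap level $u$ using the control of $N_u$ on $\xi_1$, and a case split on $\beta$ via geometric series (with the loss of geometric decay at $b=2$ producing the extra $\log n$ factors). The only minor difference is bookkeeping — you sum over scales first and count arms with $n_k\ge v$, while the paper sums $\sum_{v\le n_k}2^v\le 2\cdot 2^{n_k}$ per arm and then groups arms into the segments $I_u$ — and your attribution of the $\log(1/\delta)^2$ solely to the $\varepsilon_u$ slack is slightly off (one factor comes from $\varepsilon_u$, the other from the $\log(\cdot/\delta)$ inside $\tilde n_u$), but these do not affect correctness.
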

\begin{proof}[Proof of Lemma~\ref{lem:xi2}]
Let $(k, v)\in \mathbb N^*\times\mathbb N$. Since $(X_{k,i})_i$ are i.i.d.~from distribution bounded by $C$, we have that with probability (according to the samples) larger than $1-\delta_{k,v}$,
$$\left|\frac{1}{2^v} \sum_{i=1}^{2^v}X_{k,i} - \mu_k \right| \leq \sqrt{2C2^{-v}\log\left(1/\delta_{k,v}\right)} + 2C\times2^{-v}\log\left(1/\delta_{k,v}\right).$$
Set $\delta_{k,v} = 2^v 2^{-2\bar t_{\beta}/b}\delta$. We have
\begin{align*}
\sum_{k \leq \bar T_{\beta}}\sum_{v \leq n_k} \delta_{k,v} &=  2^{-2\bar t_{\beta}/b}\delta  \sum_{k \leq \bar T_{\beta}}\sum_{v \leq n_k} 2^v
\leq   2\times 2^{-2\bar t_{\beta}/b}\delta \sum_{k \leq T_{\beta}} 2^{n_k}
%&= 2\frac{\delta}{n}\sum_{k \leq \bar T_{\beta}}  \frac{D\log(\max(1,n\Delta_k^2)/\delta}{\max(1/n,\Delta_k^2)}\\
\leq  2\times 2^{-2\bar t_{\beta}/b}\delta  \sum_{u = 0}^{\infty} N_u  2^{\tilde n_u},
\end{align*}
since $2^{\tilde n_u}$ is increasing in $u$.

Again, since $2^{\tilde n_u}$ is increasing in $u$, is implies that on $\xi_1$, 
\begin{align}
\sum_{k \leq \bar T_{\beta}}\sum_{v \leq n_k} \delta_{k,v} &\leq 2\times 2^{-2\bar t_{\beta}/b}\delta \sum_{u = 0}^{\infty} N_u   2^{\tilde n_u} \nonumber\\
&\hspace{-2cm}\leq  2\times 2^{-2\bar t_{\beta}/b}\delta \left( \bar T_{\beta} 2^{\tilde n_{-1}} + \sum_{u = \lfloor \log_2(1/B)\rfloor+1}^{\bar t_{\beta}} N_u   2^{\tilde n_u} + \bar N^*   2^{\tilde n_{\bar t_{\beta}}} \right) \label{kick}\\
&\hspace{-2cm}\leq  2\times 2^{-2\bar t_{\beta}/b}\delta  \left( \frac{\bar T_{\beta}D\log(2^{2\bar t_{\beta}/b}E' B^{-2/\beta}/\delta)}{E B^{-2/\beta}} + \!\!\!\!\!\!\!\!\!\sum_{u = \lfloor \log_2(1/B)\rfloor+1}^{\bar t_{\beta}}\!\!\!\!\!\!\!\!\!  \frac{N_uD\log(2^{2\bar t_{\beta}/b}G(2^{-(u+1)})^2/\delta)}{G(2^{-(u+1)})^2}\nonumber
+ \bar N^*   D\log\left(\tfrac{1}{\delta}\right)2^{2\bar t_{\beta}/b} \right)\nonumber\\
&\hspace{-2cm}\leq 2\times 2^{-2\bar t_{\beta}/b}\delta \left( \frac{2^{\bar t_{\beta}}DE'\log\left(\tfrac{n}{\delta}\right)}{E} + \!\!\!\!\!\!\!\!\!\! \sum_{u = \lfloor \log_2(1/B)\rfloor+1}^{\bar t_{\beta}} \!\!\!\!\!\!\!\!\!\!\!\! 2^{\bar t_{\beta}\!-\! u \!-1\!}(1+\varepsilon_u)  \frac{D\log\left(\tfrac{E}{\delta}\right)\left(\frac{2\bar t_{\beta}}{b}\! -\! \frac{2(u\!-\!1)}{\beta}\right)}{E2^{-2(u-1)/\beta}}
+ (1+\varepsilon_{\bar t_{\beta}}) D \log\left(\tfrac{1}{\delta}\right)2^{2\bar t_{\beta}/b} \right)\nonumber\\
&\hspace{-2cm}\leq 2\times 2^{-2\bar t_{\beta}/b}\delta \left( \frac{2^{\bar t_{\beta}} DE'\log\left(\tfrac{n}{\delta}\right)}{E} + \sum_{u = 0}^{\bar t_{\beta}} 6D/(Eb) \log\left(\tfrac{E}{\delta}\right)^2  2^{\bar t_{\beta} - u + 2u/\beta}(2\bar t_{\beta} - 2(u -1)) 
+  5 D \log\left(\tfrac{1}{\delta}\right)^2 2^{2\bar t_{\beta}/b} \right),\nonumber
\end{align}
since $\varepsilon_u \leq 4(\bar t_{\beta}-u+1)\log\left(\tfrac{1}{\delta}\right)$ and since $b \leq \beta$, which implies that $2\bar t_{\beta} - 2(u -1)\geq 1$.

\paragraph{Case 1: $\beta < 2${\rm:}} In this case, $b = \beta$. Since $\sum_{u = 0}^{\infty} 2^{-u/v} u^{v'} <\infty$ for any $v,v'>0$ that on $\xi_1$, the last equation implies  
\begin{align*}
\sum_{k \leq \bar T_{\beta}}\sum_{v \leq n_k} \delta_{k,v}
\leq 2\times 2^{-2\bar t_{\beta}/\beta}\delta \left( \frac{2^{\bar t_{\beta}} DE'\log\left(\tfrac{n}{\delta}\right)}{E} + \frac{3DF_1'}{E\beta} \log\left(\tfrac{E}{\delta}\right)  2^{2\bar t_{\beta}/\beta}  +  5 D \log\left(\tfrac{1}{\delta}\right)^2 2^{2\bar t_{\beta}/\beta} \right)
\leq  F_1 \log\left(\tfrac{1}{\delta}\right)^2 \delta,
\end{align*}
where $F_1',F_1>0$ are constants.

\paragraph{Case 2: $\beta > 2${\rm:}} In this case, $b=2$. Since $\sum_{u = 0}^{\infty} 2^{-u/v}  <\infty$ for any $v>0$ that on $\xi_1$, the last equation implies
\begin{align*} \!\!\!\!\!\!\!\!\!\!\!\!\!\!\!\!\!\!\!\!\!\!\!\!
\!\!\!\!\!\!\!\!\!\!\!\!\!\!\!\!\!\!\!\!\!\!\!\!
\sum_{k \leq \bar T_{\beta}}\sum_{v \leq n_k} \delta_{k,v}
&\leq 2\times 2^{-\bar t_{\beta}}\delta \left(  \frac{2^{\bar t_{\beta}}DE'\log\left(\tfrac{n}{\delta}\right)}{E} +  \frac{3DF_2'}{E\beta}  \log\left(\tfrac{E}{\delta}\right)^2  2^{\bar t_{\beta}} t_{\beta}  +  5 D \log\left(\tfrac{1}{\delta}\right)^2 2^{\bar t_{\beta}} \right)\\
&\leq F_2 \log\left(\tfrac{1}{\delta}\right)^2 \bar t_{\beta} \delta  \leq F_2 \log\left(\tfrac{1}{\delta}\right)^2 \log(n)\delta,
\end{align*}
where $F_2',F_2>0$ are constants.

\paragraph{Case 3: $\beta = 2${\rm:}} In this case, we have on $\xi_1$
\begin{align*}
\sum_{k \leq \bar T_{\beta}}\sum_{v \leq n_k} \delta_{k,v} &\leq 2\times 2^{-\bar t_{\beta}}\delta \left(   \frac{2^{\bar t_{\beta}} DE'\log\left(\tfrac{n}{\delta}\right)}{E} + \sum_{u = 0}^{\bar t_{\beta}} \frac{3D}{E\beta}  \log\left(\tfrac{E}{\delta}\right)^2   2^{\bar t_{\beta}}(2\bar t_{\beta} - 2(u -1))  +  5 D \log\left(\tfrac{1}{\delta}\right)^2 2^{\bar t_{\beta}} \right)\\
&\leq F_3 \log\left(\tfrac{1}{\delta}\right)^2  \bar t_{\beta}^2 \delta \leq F_3 \log\left(\tfrac{1}{\delta}\right)^2  \log(n)^2 \delta.
\end{align*}
where $F_3>0$ is a constant.
\end{proof}

Let
$\xi = \xi_1 \cap \xi_2.$
By Lemmas~\ref{lem:xi1} and~\ref{lem:xi2}, we know that for a given constant $F_4$ that depends only on $\beta,D,E,E'$,
\begin{itemize}
\item Case $\beta<2$:
$$\mathbb P(\xi) \geq 1 - F_4\log\left(\tfrac{1}{\delta}\right)^2 \delta.$$
\item Case $\beta\geq 2$:
$$\mathbb P(\xi) \geq 1 - F_4\log\left(\tfrac{1}{\delta}\right)^2  \log(n)^3 \delta.$$
\end{itemize}

\subsection{Upper bound on the number of pulls of the non-near-optimal arms}\label{ss:cool}

Let $k$ be an arm such that $k \leq \bar T_{\beta}$, and $t \leq n$ be a time. On the event $\xi$, by definition, we have
$$\left|\widehat \mu_{k,t} - \mu_k \right| \leq 2\sqrt{\frac{C\log\big(2^{2\bar t_{\beta}/b}/(T_{k,t}\delta)\big)}{T_{k,t}}} + \frac{2C\log\big(2^{2\bar t_{\beta}/b}/(T_{k,t}\delta)\big)}{T_{k,t}},$$
which implies by definition of the upper confidence bound that on $\xi$
\begin{align}\label{bound}
\mu_k \leq B_{k,t}\leq \mu_k + \varepsilon_{k,t},
% \end{align}
\quad\mathrm{where}\quad \varepsilon_{k,t} = 2\sqrt{\frac{C\log\left(2^{2\bar t_{\beta}/b}/(T_{k,t}\delta)\right)}{T_{k,t}}} + \frac{2C\log\left(2^{2\bar t_{\beta}/b}/(T_{k,t}\delta)\right)}{T_{k,t}}.
\end{align}
Let us now write $k^*$ for the best arm among the ones in $\{1, \ldots, T_{\beta}\}$. Note that $k^*$  may be different from the best possible arm.
By Corollary~\ref{cor:barm}, we know that on $\xi$, $k^*$ satisfies 
$$\Delta_{k^*} \leq  \tfrac{1}{4} E'\left(\log_2\left(\log\left(1/\delta\right)\right)\right)^{96} 2^{-\bar t_{\beta}/\beta} \log\left(\tfrac{1}{\delta}\right) = \varepsilon^*.$$

Arm $k$ is pulled at time $t$ instead of $k^*$  only if
$$B_{k,t} \geq B_{k^*,t}.$$
On $\xi$, this happens if
\begin{align*}
\bar \mu^* - \varepsilon^* \leq \mu_k + \varepsilon_{k,t},
\end{align*}
which happens if (on $\xi$)
\begin{align*}
\Delta_k - \varepsilon^* \leq  \varepsilon_{k,t},
\end{align*}
and if we assume that $\Delta_k \geq 2\varepsilon^*$, it implies that on $\xi$ arm $k$ is pulled at time $t$ only if
\begin{align}\label{eq:goule}
\Delta_k  \leq  2\varepsilon_{k,t}.
\end{align}

We define $u$ such that (i) that $\mu_k \in I_u$ if $u \geq \lfloor \log_2(B) \rfloor+1$ or (ii) $u=-1$ otherwise. Assume that 
\begin{align*}
T_{k,t} \geq  2^{\tilde n_u} \geq  \frac{D\log\left(\max\left(1,2^{2\bar t_{\beta}/b}G\left(2^{-\left(u+1\right)}\right)^2\right)/\delta\right)}{\max\left(2^{-2\bar t_{\beta}/b},G\left(2^{-\left(u+1\right)}\right)^2\right)} 
&\geq \frac{D\log(2^{2\bar t_{\beta}/b}G(2^{-(u+1)})^2/\delta)}{G(2^{-(u+1)})^2},
\end{align*}
since we assumed that $\Delta_k \geq 2\varepsilon^*$, which implies that $t_{k,t}\leq t^*\leq \bar t_{\beta}$.

By Assumption~\ref{ass:reg2}, and since $\mu_k \in I_u$, we know that $G(2^{-(u+1)}) \leq \Delta_k$. Therefore, the last equation implies
\begin{align*}
T_{k,t} \geq  2^{\tilde n_u} \geq  \frac{D\log\left(\max\left(1,2^{2\bar t_{\beta}/b}G\left(2^{-\left(u+1\right)}\right)^2\right)/\delta\right)}{\max\left(2^{-2\bar t_{\beta}/b},G\left(2^{-\left(u+1\right)}\right)^2\right)} 
&\geq  \frac{D\log\left(2^{2\bar t_{\beta}/b}\Delta_k^2/\delta\right)}{\Delta_k^2}.
\end{align*}
For such a $T_{k,t}$, we have
\begin{align*}
\frac{\log\left(2^{2\bar t_{\beta}/b}/\left(T_{k,t}\delta\right)\right)}{T_{k,t}} &\leq \frac{\Delta_k^2\log\left(D2^{2\bar t_{\beta}/b}\Delta_k^2/\delta\right)}{D\log\left(2^{2\bar t_{\beta}/b}\Delta_k^2/\delta\right)}
\leq \frac{\Delta_k^2 \log D}{D}
\leq \Delta_k^2/(16C),
\end{align*}
for $D$ large enough so that $D \geq 32(C+1)\log(32(C+1))$. Therefore, by definition of $\varepsilon_k,t$, the last equation implies that for $T_{k,t} \geq 2^{\tilde n_u}$, we have
$$\varepsilon_{k,t} \leq \Delta_k/4.$$

The last equation implies together with~\eqref{eq:goule} that if $T_{k,t} \geq  2^{\tilde n_u}$, then on $\xi$, arm $k$ is not pulled from time $t$ onwards. In particular, this implies that on $\xi$
$$T_{k,n}\leq 2^{\tilde n_u},$$
for any $k \leq \bar T_{\beta}$ such that $\Delta_k \geq 2\varepsilon^*$, and such that (i) $\mu_k \in I_u$ if $u \geq \lfloor \log_2(B) \rfloor+1$ or (ii) or $u=-1$ otherwise.

Let $\mathcal A$ be the set of arms such that $\Delta_k \geq 2\varepsilon^*$. From the previous equation, the number of times that they are pulled is bounded on $\xi$ as
\begin{align*}
\sum_{k \in \mathcal A} T_{k,n} \leq \sum_{u \leq \bar t_{\beta}} N_u 2^{\tilde n_u}
\leq \bar T_{\beta} 2^{\tilde n_{-1}} + \!\!\!\!\!\!\!\!\sum_{\lfloor \log_2(B) \rfloor \leq u \leq \bar t_{\beta}} N_u 2^{\tilde n_u}.
\end{align*}
Bounding this quantity can be done in essentially the same way as in~\eqref{kick}. We again obtain three cases,
\begin{itemize} 
\item \textbf{Case 1: $\beta < 2${\rm:}} In this case on $\xi$
\begin{align*}
\sum_{k \in \mathcal A} T_{k,n} &\leq \frac{2^{\bar t_{\beta}} DE'\log(n/\delta)}{E} + \frac{3DF_1'}{E\beta} \log(E/\delta)^2  2^{2\bar t_{\beta}/\beta}
\leq n/H,
\end{align*}
where $H$ is arbitrarily large for $A$ small enough in the definition of $\bar T_{\beta}$.

\item \textbf{Case 2: $\beta > 2${\rm:}} In this case on $\xi$
\begin{align*}
\sum_{k \in \mathcal A} T_{k,n} &\leq  \frac{2^{\bar t_{\beta}}DE'\log(n/\delta)}{E} +  \frac{3DF_2'}{E\beta} \log(E/\delta)^2  2^{\bar t_{\beta}} t_{\beta}
\leq n/H,
\end{align*}
where $H$ is arbitrarily large for $A$ small enough in the definition of $\bar T_{\beta}$.

\item \textbf{Case 3: $\beta = 2${\rm:}} In this case on $\xi$
\begin{align*}
\sum_{k \in \mathcal A} T_{k,n} &\leq  \frac{2^{\bar t_{\beta}}  DE'\log(n/\delta)}{E} + \sum_{u = 0}^{\bar t_{\beta}} \frac{3D}{E\beta} \log(E/\delta)^2  2^{\bar t_{\beta}}(2\bar t_{\beta} - 2(u -1)) 
\leq n/H,
\end{align*}
where $H$ is arbitrarily large for $A$ small enough in the definition of $\bar T_{\beta}$.
\end{itemize}

Consider now $u^*$ such that $u^* = \lfloor \log_2\big(1/\bar F(\varepsilon^*)\big)\rfloor$. By definition of $\varepsilon^*$, we know that on $\xi$, we have
$$u^* \geq t^*-\upsilon(\delta)$$
Therefore with high probability, by Lemma~\ref{lem:xi1} and as in Corollary~\ref{cor:barm}, on $\xi$, there are less than $N(\delta)$ arms of index smaller than $\bar T_{\beta}$ such that $\Delta_k \leq 2\varepsilon^*$, where $N(\delta)$ is a constant that depends only on $\delta$. For $H$ large enough, on $\xi$, $N(\delta)\leq H$. This implies, together with the three cases, that there is at least an arm of index smaller than $\bar T_{\beta}$ and such that $\Delta_k \leq 2\varepsilon^*$ that is pulled more than $n/H$ times. This implies that the most pulled arm is such that, on $\xi$, $\Delta_k \leq 2\varepsilon^*$. This implies that the regret is  on $\xi$ bounded as
\begin{align*}
r_n \leq \frac{E'(\log_2(\log(1/\delta)))^{96}}{4} 2^{-\bar t_{\beta}/\beta} \log(1/\delta)
\leq E'' n^{b/(2\beta)} A(n)^{1/\beta}   (\log(\log(1/\delta)))^{96}\log(1/\delta)
\end{align*}

Therefore, by Lemmas~\ref{lem:xi1} and~\ref{lem:xi2}, the previous equation implies in the three cases for some constants $E_4, E'''$:
\begin{itemize}
\item Case $\beta<2$: With probability larger than $1  - F_4\log(1/\delta)^2 \delta$, we have
\begin{align*}
r_n &\leq E''' n^{-1/2}  (\log(\log(1/\delta)))^{96}\log(1/\delta),
\end{align*}
hence with probability larger than $1-\delta$,
\begin{align*}
r_n &\leq E_4 n^{-1/2}  \log(1/\delta) (\log(\log(1/\delta)))^{96} \sim n^{-1/2}.
\end{align*}

\item Case $\beta> 2$: With probability larger than $1 - F_4\log(1/\delta)^2  \log(n)^3 \delta$, we have
\begin{align*}
r_n &\leq E''' (n\log(n))^{-1/\beta}  (\log(\log(1/\delta)))^{96}\log(1/\delta),
\end{align*}
hence with probability larger than $1-\delta$,
\begin{align*}
r_n &\leq E_4 (n\log(n))^{-1/\beta}  (\log(\log(\log(n)/\delta)))^{96}\log(\log(n)/\delta) \sim (n\log(n))^{-1/\beta} \log\log(n) \log\log\log(n)^{96}.
\end{align*}

\item Case $\beta = 2$: With probability larger than $1  - F_4\log(1/\delta)^2  \log(n)^3 \delta$, we have
\begin{align*}
r_n &\leq E''' \log(n) n^{-1/2}  (\log(\log(1/\delta)))^{96}\log(1/\delta),
\end{align*}
hence with probability larger than $1-\delta$,
\begin{align*}
r_n &\leq E_4 \log(n) n^{-1/2}  (\log(\log(\log(n)/\delta)))^{96}\log(\log(n)/\delta) \sim \log(n) n^{-1/2} \log\log(n) \log\log\log(n)^{96}.
\end{align*}
\end{itemize}

\section{Full proof of Theorem~\ref{thm:lb}} \label{proof:lb}

%\begin{theorem}[Lower bounds]
%Let Assumption~\ref{ass:reg} be satisfied. Let $\mathcal A$ be the class of adaptive algorithms. Let
%$$b = \max(2, \beta).$$
%For any strategy in $\mathcal A$, we have that its simple regret is lower bound, with probability larger than $1/2 + u$, by
%$$c(u) n^{-1/b},$$
%where $u>0$ is a small constant.
%\end{theorem}

\subsection{Case $\beta <2$}

By Assumption~\ref{ass:reg} (equivalent to Assumption~\ref{ass:reg2}), we know that
$$E' u^{1/\beta} \geq G(u)\geq E u^{1/\beta}.$$
Assume that when pulling an arm from the reservoir, its distribution is Gaussian of mean following the distribution associated to $G$ and has variance $1$. 
Since the budget is bounded by $n$, an algorithm pulls at most $n$ arms from the arm reservoir. %Let us write $N = $ for the total number of arms pulled from the reservoir at the end of the algorithm.
%Let 
%$$I_0 = [\mu^* - E \Big(\frac{c_0}{N}\Big)^{1/\beta}, \mu^*].$$
%where $c_0$ is a constant defined in function of $\delta>0$ such that, if we write $N_0$ for the number of arms in $I_0$, we have
%$$\mathbb P_1(N_0 = 0) \geq (1 - \frac{c_0}{N})^N \geq \exp(-c_0/2) \geq 1 - \delta.$$
%So whenever $N \leq C n^{\beta/2}$ where $C>0$, there are no arms in $I_0$ with probability larger than $1-\delta$, and therefore, the regret of the algorithm is larger than 
%$$ E \Big(\frac{c_0}{N}\Big)^{1/\beta} \geq E C^{-1/\beta} c_0^{1/\beta}n^{-1/2},$$
%with probability larger than $1-\delta$.
%Assume now that $N := C_nn^{\beta/2}$ where $C_n\geq1$.
Let us define
$$I_1 = \left[\bar \mu^* - \frac{E'c_1^{1/\beta}}{\sqrt{n}} , \bar \mu^* - \frac{E(c_1')^{1/\beta}}{\sqrt{n}}\right]
\quad\mathrm{and}\quad
I_2 = \left[\bar \mu^* - \frac{E(c_1')^{1/\beta}}{\sqrt{n}} , \bar \mu^*\right],$$
where $c_1, c_1'$ are constants. If we denote $N_1$ the number of arms in $I_1$ and $N_2$ the number of arms in $I_2$ among the $n$ first arms pulled from the arm reservoir, we can use Bernstein's inequality and for $n \geq 1$ larger than a large enough constant %since $N \geq n^{\beta/2}$
$$\mathbb P_1\left(N_2 \geq n\frac{E'/Ec_1'}{n^{\beta/2}}\left(1 + \log\left(1/\delta\right)\right)\right) \leq \delta
\quad\mathrm{and}\quad
\mathbb P_1\left(N_1 \leq  n\frac{c_1 - c_1'}{n^{\beta/2}} \left(1 - \log\left(1/\delta\right)\right) \right) \leq \delta.$$
Consequently, for $c_1$ large enough when compared to $c'_1$, it implies that with probability larger than $1-2\delta$, we have that $N_1>1$ and $N_1 > N_2$. Consider the event $\xi$ of probability $1-2\delta$ where this is satisfied.

On $\xi$, a problem that is strictly easier than the initial problem is the one where an oracle points out two arms to the learner, the best arm in $I_2$ and the worst arm in $I_1$, and where the objective is to distinguish between these two arms and output the arm in $I_2$. Indeed, this problem is on $\xi$ strictly easier than an \emph{intermediate problem} where an oracle provides the set of arms in $I_1 \cup I_2$ and asks for an arm in $I_2$, since $N_1>N_2$. On $\xi$, this intermediate problem is  in turn strictly easier  than the original problem of outputting an arm in $I_2$ without oracle knowledge. Therefore, for the purpose of constructing the lower bound, we will now turn to the strictly easier problem of deciding between the arm $k^*$ with highest mean in $I_2$, and the arm $k$ with lowest mean in $I_1$ and prove the lower bound for this strictly easier problem.  %On this event, at least one arm is in $I_1$, and there is no arms in $I_0$.

Since the number of pulls on both $k^*$ and $k$ is bounded by $n$, we use the chain rule and the fact that the distributions are Gaussian to get on $\xi$
$${\rm KL}(k, k^*) \leq n (\mu - \mu^*)^2,$$
where $\mu$ is the mean of $k$ and $\mu^*$ is the mean of $k^*$.
Given $\xi$, let $p$ be the probability that $k$ is selected as the best arm, and $p^*$  the probability that $k^*$ is selected as the best arm. By Pinsker's inequality, we know that on $\xi$
$$\left|p - p^*\right| \leq \sqrt{{\rm KL}(k_1, k^*)} \leq \sqrt{n} |\mu_1 - \bar \mu^*| \leq \sqrt{n} E'\frac{c_1^{1/\beta}}{\sqrt{n}} \leq E'c_1^{1/\beta}.$$
Since there are only two arms in this simplified game, we know that on $\xi$
$$p^* \leq 1/2 + E'c_1^{1/\beta} \leq 7/12.$$
for $c_1$ small enough. By definition of $\xi$ and since the problem we considered is easier than the initial problem, we know that for all algorithms, the probability $P^*$ of selecting an arm in $I_2$ is bounded as follows where we add the probability that $\xi$ does not hold,
$$P^* \leq 7/12 + 2\delta \leq 2/3,$$
for $\delta$ small enough. This concludes the proof by definition of $I_2$.
%So this implies that the probability of pi

% which concludes the proof.

\subsection{Case $\beta \geq 2$}

By Assumption~\ref{ass:reg} (equivalent to Assumption~\ref{ass:reg2}), we know that
$$E' u^{1/\beta} \geq G(u)\geq E u^{1/\beta}.$$
Assume that when pulling an arm from the reservoir, its distribution is Gaussian of mean following the distribution associated to $G$ and has a variance $1$. 
%Let us write $N$ for the total number of arms pulled from the reservoir at the end of the algorithm.
The total number of arms pulled in the reservoir is smaller than $n$ since the budget is bounded by $n$. Let 
$$I_0 = \left[\bar \mu^* - E \left(\frac{c_0}{n}\right)^{1/\beta}, \bar \mu^*\right].$$
where $c_0$ is a constant defined in function of $\delta>0$ such that, if we denote $N_0$ for the number of arms in $I_0$, we have
$$\mathbb P_1\left(N_0 = 0\right) \geq \left(1 - \frac{c_0}{n}\right)^n \geq \exp\left(-c_0/2\right) \geq 1 - \delta.$$
Thereupon, there are no arms in $I_0$ with probability larger than $1-\delta$, and therefore, with probability larger than $1-\delta$ the regret of the algorithm is larger than 
$$ E \left(\frac{c_0}{n}\right)^{1/\beta}.$$
 %Since $\beta \geq 

% $$\tilde n_u =  \log_2\big(\frac{D\log(\max(1,2^{2\bar t_{\beta}/b}G(2^{-(u+1)})^2)/\delta)}{\max(2^{-2\bar t_{\beta}/b},G(2^{-(u+1)})^2)} \big).$$

% \todom[inline]{
% check if the $\underline{\beta}$ update was done right 
%  in the code for beta estimation we ignore beta part
% }

\section{Proof of Lemma~\ref{lem:beta}}\label{s:proofbeta}
By a union bound, we know that with probability larger than $1-\delta$, for all $k \leq N$, we have
$$\left|\widehat m_k - m_k\right| \leq c\sqrt{\frac{\log\left(N/\delta\right)}{N}}.$$

Note that by Assumption~\ref{ass:reg}, we have that with probability larger than $1-\delta$,
$$\left|\widehat m^* - \bar \mu^*\right| \leq c\left(\frac{1}{\delta N}\right)^{1/\beta}.$$
Let us write
$$v_N = c\sqrt{\frac{\log\left(N/\delta\right)}{N}} + c\left(\frac{1}{\delta N}\right)^{1/\beta}.$$

Note first that with probability larger than $1-\delta$ on the samples (not on $m_k$)
$$\frac{1}{N} \sum_{k=1}^N \mathbf 1\{\bar \mu^* -  m_k \leq N^{-\varepsilon} + v_N \} \geq \widehat p  \geq \frac{1}{N} \sum_{k=1}^N \mathbf 1\{\bar \mu^* -  m_k  \leq N^{-\varepsilon} - v_N\},$$

We now define for $l \in \{0,1\}$ 
$$p^{+} = \mathbb P_{m \sim \mathcal L} \left(\bar \mu^* - m \leq N^{-\varepsilon}+v_N\right)
\quad\mathrm{and}\quad
p^{-} = \mathbb P_{m \sim \mathcal L} \left(\bar \mu^* - m \leq N^{-\varepsilon}-v_N\right).$$
Notice that for $n$ larger than a constant that depends on $\tilde B$ of Assumption~\ref{ass:reg}, we have by Assumption~\ref{ass:reg} the following bound for $* \in \{+,-\}$, since $(v_N N^{\varepsilon})\leq 1/2\delta^{-1/\beta}$ as $\varepsilon < \min(\beta, 1/2)$,  and also for $N$ larger than a constant that depends on $\tilde B$ only
\begin{align*}
\left| - \frac{\log(p^{*})}{\varepsilon \log N} - \beta\right| &\leq \frac{ (v_N N^{\varepsilon})^\beta /\beta + \max(1,\log(\tilde E'),|\log(\tilde E)|)}{\varepsilon \log N}
\leq \frac{\delta^{-1/\beta}/\beta + \max(1,\log(\tilde E'),|\log(\tilde E)|)}{\varepsilon \log N},
\end{align*}
which implies that
$$p^* \geq c'N^{-\beta \varepsilon},$$
where $c'>1/2$ is a small constant that is larger than $\tilde E/2$ for $n$ larger than a constant that depends only on $\tilde B$.

By Hoeffding's inequality applied to Bernoulli random variables, we have that with probability larger than $1-\delta$
$$\left|\frac{1}{N} \sum_{k=1}^N \mathbf 1\{\bar \mu^* -  m_k \leq N^{-\varepsilon} + v_N \} - p^{+}\right| \leq c\sqrt{\frac{\log(1/\delta)}{N}}\eqdef w_N,$$
and the same for $p^{-}$ with $\frac{1}{N} \sum_{k=1}^N \mathbf 1\{\bar \mu^* -  m_k  \leq N^{-\varepsilon} - v_N\}$. All of this implies that with probability larger than $1 - 6 \delta$
$$p^{+} + w_N \geq \widehat p \geq p^{-} - w_N,$$
which implies that with probability larger than $1 - 6 \delta$
$$- \frac{ \log(p^{+} + w_N)}{\varepsilon \log N} \leq \widehat p \leq - \frac{\log(p^{-} - w_N) }{\varepsilon \log N},$$
i.e.,~with probability larger than $1 - 6 \delta$, since $ w_N/p^{-} \leq 1/2\sqrt{\log(1/\delta)}$ as $n$ is large enough (larger than a constant) and since $\beta \leq 1/(2\varepsilon) $
$$- \frac{ \log(p^{+})}{\varepsilon \log N} -  \frac{ 2 w_N}{p^{+}\log(n)\varepsilon} \leq - \frac{ \log(\widehat p)}{\varepsilon \log N} \geq - \frac{\log(p^{-}) }{\varepsilon \log N} +   \frac{2 w_N}{p^{-}\varepsilon \log N},$$
which implies the final claim
$$ \left|- \frac{ \log(\widehat p)}{\varepsilon \log N} - \beta\right| \leq \frac{\delta^{-1/\beta}/\beta + \sqrt{\log(1/\delta)} + \max(1,\log(\tilde E'),|\log(\tilde E)|)}{\varepsilon \log N}.$$

\end{document}